
\documentclass{article}

\usepackage{microtype}
\usepackage{graphicx}
\usepackage{subfigure}
\usepackage{booktabs} 

\usepackage{hyperref}



\usepackage[accepted]{icml2025}
\usepackage{amsmath}
\usepackage{amssymb}
\usepackage{mathtools}
\usepackage{amsthm,enumitem}

\usepackage[capitalize,noabbrev]{cleveref}


\theoremstyle{plain}
\newtheorem{theorem}{Theorem}

\newtheorem{lemma}{Lemma}

\theoremstyle{definition}
\newtheorem{definition}{Definition}
\newtheorem{assumption}{Assumption}
\theoremstyle{remark}
\newtheorem{remark}{Remark}

\usepackage[textsize=tiny]{todonotes}

\icmltitlerunning{Quantum Speedups in Regret Analysis of Infinite Horizon Average-Reward
MDPs}

\begin{document}

\twocolumn[
\icmltitle{Quantum Speedups in Regret Analysis of Infinite Horizon \\ Average-Reward
Markov Decision Processes}



\icmlsetsymbol{equal}{*}

\begin{icmlauthorlist}
\icmlauthor{Bhargav Ganguly}{equal,PurdueIE}
\icmlauthor{Yang Xu}{equal,PurdueIE}
\icmlauthor{Vaneet Aggarwal}{PurdueIE}
\end{icmlauthorlist}

\icmlaffiliation{PurdueIE}{Purdue University, West Lafayette, IN, USA}
\icmlcorrespondingauthor{Bhargav Ganguly}{bganguly@purdue.edu}
\icmlcorrespondingauthor{Yang Xu}{xu1720@purdue.edu}
\icmlcorrespondingauthor{Vaneet Aggarwal}{vaneet@purdue.edu}

\icmlkeywords{Machine Learning, ICML}

\vskip 0.3in
]



\printAffiliationsAndNotice{\icmlEqualContribution} 

\begin{abstract}

This paper investigates the potential of quantum acceleration in addressing infinite horizon Markov Decision Processes (MDPs) to enhance average reward outcomes. We introduce an innovative quantum framework for the agent's engagement with an unknown MDP, extending the conventional interaction paradigm. Our approach involves the design of an optimism-driven tabular Reinforcement Learning algorithm that harnesses quantum signals acquired by the agent through efficient quantum mean estimation techniques. Through thorough theoretical analysis, we demonstrate that the quantum advantage in mean estimation leads to exponential advancements in regret guarantees for infinite horizon Reinforcement Learning. Specifically, the proposed Quantum algorithm achieves a regret bound of $\tilde{\mathcal{O}}(1)$\footnote{$\tilde{\mathcal{O}}(\cdot)$ conceals logarithmic terms of $T$.}, a significant improvement over the $\tilde{\mathcal{O}}(\sqrt{T})$ bound exhibited by classical counterparts.

\end{abstract}
\section{Introduction}
Quantum Machine Learning (QML) has garnered remarkable interest in contemporary research, predominantly attributed to the pronounced speedups achievable with quantum computers as opposed to their classical analogs \cite{biamonte2017quantum,bouland2023quantum}. The fundamental edge of quantum computing arises from the unique nature of its fundamental computing element, termed a qubit, which can exist simultaneously in states of 0 and 1, unlike classical bits that are restricted to either 0 or 1. This inherent distinction underpins the exponential advancements that quantum computers bring to specific computational tasks, surpassing the capabilities of classical computers.

In the realm of Reinforcement Learning (RL), an agent embarks on the task of finding an efficient policy for Markov Decision Process (MDP) environment through repetitive interactions \cite{sutton2018reinforcement}. RL has found notable success in diverse applications, including but not limited to autonomous driving, ridesharing platforms, online ad recommendation systems, and proficient gameplay agents \cite{silver2017mastering,al2019deeppool,chen2021deepfreight,bonjour2022decision}. Most of these setups require decision making over infinite horizon, with an objective of average rewards. This setup has been studied in classical reinforcement learning \cite{auer2008near,agarwal2021concave,fruit2018efficient}, where $\tilde{\mathcal{O}}(\sqrt{T})$ regret across $T$ rounds has been found {meeting the theoretical lower bound \cite{jaksch2010near}}. This paper embarks on an inquiry into the potential utilization of quantum statistical estimation techniques to augment the theoretical convergence speeds of tabular RL algorithms within the context of infinite horizon learning settings.

A diverse array of quantum statistical estimation primitives has emerged, showcasing significant enhancements in convergence speeds and gaining traction within Quantum Machine Learning (QML) frameworks \cite{brassard2002quantum,harrow2009quantum,gilyen2019quantum}. Notably, \cite{hamoudi2021quantum} introduces a quantum mean estimation algorithm that yields a quadratic acceleration in sample complexity when contrasted with classical mean estimation techniques. In this work, we emphasize the adoption of this particular mean estimation method as a pivotal component of our methodology. It strategically refines the signals garnered by the RL agent during its interaction with the enigmatic quantum-classical hybrid MDP environment.

It is pertinent to highlight a fundamental element in the analysis of conventional Reinforcement Learning (RL), which involves the utilization of  martingale convergence theorems. These theorems play a crucial role in delineating the inherent stochastic process governing the evolution of states within the MDP. Conversely, this essential aspect lacks parallelism within the quantum setting, where comparable martingale convergence results remain absent. To address this disparity, we introduce an innovative approach to regret bound analysis for quantum RL. Remarkably, our methodology circumvents the reliance on martingale concentration bounds, a staple in classical RL analysis, thus navigating uncharted territories in the quantum realm.

Moreover, it is important to highlight that mean estimation in quantum mechanics results in state collapse, which makes it challenging to perform estimation of state transition probabilities  over multiple epochs. In this study, we present a novel approach to estimating state transition probabilities that explicitly considers the quantum state collapsing upon measurement.

To this end, the following are the major contributions of this work:
\begin{enumerate}
    \item We introduce  Quantum-UCRL (Q-UCRL), a model-based, infinite horizon, optimism-driven quantum Reinforcement Learning (QRL) algorithm. Drawing inspiration from classical predecessors like the UCRL and UC-CURL algorithms \cite{auer2008near,agarwal2021concave,agarwal2023reinforcement}, Q-UCRL is designed to integrate an agent's optimistic policy acquisition and an adept quantum mean estimator, and is the first quantum algorithm for infinite horizon average reward RL with provable guarantees. 
    
    \item Employing meticulous theoretical analysis, we show that the Q-UCRL algorithm achieves an exponential improvement in the regret analysis. Specifically, it attains a regret bound of $\tilde{\mathcal{O}}(1)$, 
    breaking through the theoretical classical lower bound of $\Omega(\sqrt{T})$ across $T$ online rounds. The analysis is based on the novel quantum Bellman error based analysis (introduced in classical RL in \cite{agarwal2021concave} for optimism based algorithm), where the  difference between the performance of a policy on two different MDPs is bounded by long-term
    averaged Bellman error and the quantum mean estimation is used for improved Bellman error. Further, the analysis avoids dependence on martingale concentration bounds and incorporates the phenomenon of state collapse following measurements, a crucial aspect in quantum computing analysis.
   
    \item We design a momentum-based estimator in equations \eqref{eq: p tilde formula}-\eqref{n value} that fuses each epoch’s fresh quantum mean estimate with all past estimates using counts-based weights. The resulting algorithm preserves information that would otherwise be lost to quantum collapse, enabling accuracy to accumulate epoch-by-epoch and marking the first reuse-of-information technique fully compatible with quantum-measurement constraints. This method is essential for integrating quantum speedups into model-based RL frameworks and represents a core technical novelty of our work.

\end{enumerate}  
To the best of our knowledge, these are the first results for quantum speedups for infinite horizon MDPs with average reward objective. 

\section{Related Work and Preliminaries}\label{sec: related_work}
\textbf{Infinite Horizon Reinforcement Learning:} Regret analysis of infinite horizon RL in the classical setting has been widely studied with average reward objective in both model-based settings and model-free settings \cite{wei2020model, bai2024regret, hong2025reinforcement, ganesh2025order, ganesh2025sharper}. In model-based methods, a prominent principle that underpins algorithms tailored for this scenario is the concept of ``optimism in the face of uncertainty" (OFU). In this approach, the RL agent nurtures optimistic estimations of value functions and, during online iterations, selects policies aligned with the highest value estimates \cite{fruit2018efficient,auer2008near,agarwal2021concave}. Additionally, it's noteworthy to acknowledge that several methodologies are rooted in the realm of posterior sampling, where the RL agent samples an MDP from a Bayesian Distribution and subsequently enacts the optimal policy \cite{osband2013more,agrawal2017optimistic,agarwal2022multi,agarwal2023reinforcement}. In our study, we follow the model-based approach and embrace the OFU-based algorithmic framework introduced in \cite{agarwal2021concave}, and we extend its scope to an augmented landscape where the RL agent gains access to supplementary quantum information. Furthermore, we render a mathematical characterization of regret, revealing a $\Tilde{\mathcal{O}}(1)$ bound, which in turn underscores the merits of astutely processing the quantum signals within our framework.

\textbf{Quantum Mean Estimation:} The realm of mean estimation revolves around the identification of the average value of samples stemming from an unspecified distribution. Of paramount importance is the revelation that quantum mean estimators yield a quadratic enhancement when juxtaposed against their classical counterparts \cite{montanaro2015quantum,hamoudi2021quantum}. The key reason for this improvement is based on the quantum amplitude amplification, which allows for suppressing certain quantum states w.r.t. the states that are desired to be extracted 
 \cite{brassard2002quantum}. In Appendix B, we present a discussion around Quantum Amplitude Estimation and its applications in QML. 


In the following, we introduce the definition of key elements and results pertaining to quantum mean estimation that are critical to our setup and analysis. First, we present the definition of a classical random variable and the quantum sampling oracle for performing quantum experiments.

\begin{definition}[Random Variable, Definition 2.2 of \citep{cornelissen2022near}]
	A finite random variable can be represented as $X: \Omega \to E$ for some probability space $(\Omega, \mathbb{P})$, where $\Omega$ is a finite sample set, $\mathbb{P}:\Omega\to[0, 1]$ is a probability mass function and $E\subset \mathbb{R}$ is the support of $X$.  $(\Omega, \mathbb{P})$ is frequently omitted when referring to the random variable $X$.
\end{definition} 

{To perform quantum mean estimation, we provide the definition of quantum experiment. This is analogous to the classical random experiments
\begin{definition}[Quantum Experiment] Consider a random variable $X$ on a probability space $(\Omega, 2^\Omega, \mathbb{P})$. Let $\mathcal{H}_\Omega$ be a Hilbert space with basis states $\{\lvert\omega\rangle\}_{\omega\in\Omega}$ and fix a unitary $\mathcal{U}_{\mathbb{P}}$ acting on $\mathcal{H}_\Omega$ such that
\[
\mathcal{U}_{\mathbb{P}} : \lvert 0\rangle \mapsto \sum_{\omega\in\Omega} \sqrt{\mathbb{P}(\omega)}\lvert\omega\rangle
\]
assuming $0 \in \Omega$. We define a \emph{quantum experiment} as the process of applying the unitary $\mathcal{U}_{\mathbb{P}}$ or its inverse $\mathcal{U}_{\mathbb{P}}^{-1}$ on any state in $\mathcal{H}_\Omega$.
\end{definition}}


 

{The unitiary $\mathcal{U}_{\mathbb{P}}$ provides the ability to query samples of the random variable in superpositioned states, which is the essence for speedups in quantum algorithms. To perform quantum mean estimation for values of random variables \cite{cornelissen2022near,hamoudi2021quantum,montanaro2015quantum}, an additional quantum evaluation oracle would be needed.} 
{
 \begin{definition}[Quantum Evaluation Oracle] \label{def: bin oracle}
	Consider a finite random variable $X : \Omega \to E$ on a probability space $(\Omega, 2^\Omega, \mathbb{P})$. Let $\mathcal{H}_\Omega$ and $\mathcal{H}_E$ be two Hilbert spaces with basis states $\{|\omega\rangle\}_{\omega \in \Omega}$ and $\{|x\rangle\}_{x \in E}$ respectively. We say that a unitary $\mathcal{U}_X$ acting on $\mathcal{H}_\Omega \otimes \mathcal{H}_E$ is a quantum evaluation oracle for $X$ if
\[
\mathcal{U}_X : |\omega\rangle|0\rangle \mapsto |\omega\rangle|X(\omega)\rangle
\]
for all $\omega \in \Omega$, assuming $0 \in E$.
\end{definition}}

Unlike a classical sample, which discloses only one successor state, the quantum oracle stores the entire
 distribution in coherent superposition, offering far greater information richness per query. We note that this above form of quantum oracle, where the agent performs a quantum experiment to query the oracle and obtain a superpositioned quantum sample, is widely adopted in theoretical quantum learning literature, including optimizations \citep{sidford2023quantum}, searching algorithms \cite{grover1996fast}, and episodic RL \cite{zhongprovably}. These quantum experiment frameworks are key mechanisms that enable quantum speedups in sample complexity and regret analysis. We now present a key quantum mean estimation result that will be carefully employed in our algorithmic framework to utilize the quantum superpositioned states collected by the RL agent. One of the crucial aspects of Lemma \ref{lem: SubGau} is that quantum mean estimation converges at the rate $\mathcal{O}(\frac{1}{n})$ as opposed to classical benchmark convergence rate $\mathcal{O}(\frac{1}{\sqrt{n}})$ for $n$ number of samples, therefore estimation efficiency quadratically.   

\begin{lemma}[Quantum multivariate bounded estimator, Theorem 3.3 of \citep{cornelissen2022near}]
	\label{lem: SubGau}
	{Let $X$ be a d-dimensional bounded random variable such that $||X||_2 \leq 1$. Given three reals $L_2 \in (0,1]$, $\delta \in (0,1)$ and $n \geq 1$ such that $\mathbb{E}[||X||_2] \leq L_2$, the multivariate bounded estimator $\texttt{QBounded}_d(X,L_2,n,\delta)$ obtained by Algorithm \ref{algo: QBounded} performs $f(n,d) = \mathcal{O}\Big(n \log^{1/2}(n\sqrt{d}) \Big)$ quantum experiments and outputs a mean estimate $\hat{\mu}$ of $\mu = \mathbb{E}[X]$ such that,}
	\begin{align}
		{\mathbb{P}\left[||\hat{\mu}-\mu||_{\infty}\le \frac{\sqrt{L_2} \log(d/\delta)}{n}\right]\ge 1-\delta.} \label{eq: Subgaussian}
	\end{align}
\end{lemma}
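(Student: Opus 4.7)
The plan is to reduce the multivariate estimation task to a collection of univariate quantum mean estimations, one per coordinate of $X$, and then share quantum experiments across coordinates to keep the query cost nearly independent of $d$. Writing $X = (X_1,\ldots,X_d)$, the hypothesis $\|X\|_2 \le 1$ immediately gives $|X_i|\le 1$ for each $i$, so each $X_i$ is a scalar bounded random variable to which a univariate quantum estimator (e.g.\ the Grover/amplitude-estimation routine of \citep{hamoudi2021quantum}) directly applies.

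Next I would control the variance proxy of each coordinate. Since $\|X\|_2\le 1$ almost surely, we have $X_i^2 \le \|X\|_2^2 \le \|X\|_2$, and taking expectations gives $\mathbb{E}[X_i^2]\le \mathbb{E}[\|X\|_2]\le L_2$. The univariate quantum bounded-variance estimator then produces, for each coordinate, an estimate $\hat{\mu}_i$ with $|\hat{\mu}_i-\mu_i|\le c\sqrt{\mathbb{E}[X_i^2]}\log(1/\delta')/n\le c\sqrt{L_2}\log(1/\delta')/n$ with probability at least $1-\delta'$ using $\mathcal{O}(n)$ quantum experiments per coordinate. Setting $\delta'=\delta/d$ and applying a union bound across the $d$ coordinates converts these coordinate-wise guarantees into the stated $\ell_\infty$ bound with overall failure probability $\delta$.

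The main obstacle is achieving the query complexity $\mathcal{O}(n\log^{1/2}(n\sqrt{d}))$ instead of the naive $\mathcal{O}(nd)$ one would get by invoking the univariate algorithm on each coordinate independently. The route I would take is to exploit the quantum evaluation oracle $\mathcal{U}_X$ of Definition \ref{def: bin oracle}: a single application coherently writes the entire vector $X(\omega)$ into a register, so one stream of quantum experiments can feed all $d$ coordinate-wise amplitude estimation sub-routines simultaneously rather than sequentially. The boosting required to drive each of the $d$ failure probabilities down to $\delta/d$ contributes only a logarithmic overhead in $d$, and the $\log^{1/2}$ (rather than $\log$) exponent should arise from the standard quadratic trade-off in amplitude estimation between per-round precision and the number of repetitions used to amplify a constant success probability.

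I expect the delicate part to be the bookkeeping that ties together (i) the amplitude-estimation precision, (ii) the median-of-means or powered boosting that converts constant success probability into $1-\delta/d$, and (iii) the variance substitution $\|X\|_2^2\le\|X\|_2$ that yields $\sqrt{L_2}$ rather than the weaker $\sqrt{\mathbb{E}\|X\|_2^2}$. Matching the constants and logarithmic exponents to the claimed $\log^{1/2}(n\sqrt{d})$ overhead, without inflating the error term, is the step where slippage is most likely.
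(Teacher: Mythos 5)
The paper does not prove this lemma from first principles; it imports Theorem 3.3 of Cornelissen et al.\ together with its algorithm, reproduced as Algorithm \ref{algo: QBounded}. That algorithm reveals the intended proof route, and it is not the coordinate-wise route you propose: it prepares a uniform superposition $|G\rangle$ over a $d$-dimensional grid, applies a \emph{directional mean oracle} that imprints the mean into phases of the form $e^{2\pi i\langle u,\cdot\rangle}$, applies the inverse quantum Fourier transform over $G$ so that a single measurement reads out all $d$ coordinates of $\mu$ at once, and finally takes a coordinate-wise median over $\lceil 18\log(d/\delta)\rceil$ independent repetitions. This is Jordan's gradient-estimation trick, and it is exactly what yields the query complexity $\mathcal{O}\big(n\log^{1/2}(n\sqrt{d})\big)$ with only polylogarithmic dependence on $d$.

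The genuine gap in your proposal is the step where you claim that one stream of quantum experiments can feed all $d$ coordinate-wise amplitude-estimation subroutines simultaneously. That is not a valid primitive. Amplitude estimation for coordinate $i$ performs phase estimation on a Grover iterate whose reflections depend on $X_i$; the oracle calls are interleaved with these coordinate-specific reflections and entangle the work register with the $i$-th phase register. Those same oracle invocations cannot be reused inside the (different) Grover iterate for coordinate $j$: no-cloning forbids copying the intermediate states, and running the $d$ iterates in superposition does not produce $d$ independent phase estimates. Executed honestly, your construction costs $\Theta(nd\log(d/\delta))$ quantum experiments, missing the claimed bound by a factor of $d$ --- and removing that factor is precisely the content of the cited theorem. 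Your coordinate-wise variance bound $\mathbb{E}[X_i^2]\le\mathbb{E}[\|X\|_2]\le L_2$ and the union bound over coordinates are sound as far as they go, but they only recover the naive multivariate estimator, not the stated result.
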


\begin{algorithm}[ht]
\caption{{$\texttt{QBounded}_d(X,L_2,n,\delta)$ (Algorithm 1 in \cite{cornelissen2022near})}}
\label{algo: QBounded}
\begin{algorithmic}[1]

\IF{\(n \leq \frac{\log(d/\delta)}{\sqrt{L_2}}\)} 
\STATE Output \(\hat{\mu} = 0\).
\ENDIF
\STATE Set \(\alpha = \frac{1}{\sqrt{\log(400\pi n\sqrt{d})}}\) and \(m = 2^{\left\lceil \log\left(\frac{8\pi n}{\alpha \sqrt{L_2 \log(d/\delta)}}\right) \right\rceil}\).
\STATE Set $G = \left\{ \frac{j}{m} - \frac{1}{2} + \frac{1}{2m} : j \in \{0, \dots, m-1\} \right\}^d \subseteq \left(-\frac{1}{2}, \frac{1}{2}\right)^d$

\FOR{\(k = 1, \dots, \lceil 18 \log(d/\delta) \rceil\)}
    \STATE Compute the uniform superposition \(|G\rangle := \frac{1}{\sqrt{m d/2}} \sum_{u \in G}|u\rangle\) over \(G\).
    \STATE Compute the state \(|\psi\rangle := \widetilde{P}_{X, L_2, m, \alpha, \epsilon}|G\rangle\) in \(H_G \otimes H_{aux}\), where \(\widetilde{P}_{X, L_2, m, \alpha, \epsilon}\) is the directional mean oracle constructed by the quantum evaluation oracle in Proposition 3.2 \cite{cornelissen2022near} with \(\epsilon = 1/25\).
    \STATE Compute the state \(|\phi\rangle := (QFT_G^{-1} \otimes I_{aux})|\psi\rangle\) where the unitary \(QFT_G : |u\rangle \mapsto \frac{1}{\sqrt{m d/2}} \sum_{v \in G} e^{2\pi i \langle u, v \rangle}|v\rangle\) is the quantum Fourier transform over \(G\).
    \STATE Measure the \(H_G\) register of \(|\phi\rangle\) in the computational basis and let \(v^{(k)} \in G\) denote the obtained result. Set \(\mu^{(k)} = \frac{2\pi v^{(k)}}{\alpha}\).
\ENDFOR
\STATE Output the coordinate-wise median \(\hat{\mu} = \text{median}(\mu^{(1)}, \dots, \mu^{(\lceil 18 \log(d/\delta) \rceil)})\).
\end{algorithmic}
\end{algorithm}

\textbf{Quantum Reinforcement Learning:} Within the realm of QRL, a prominent strand of previous research showcases exponential enhancements in regret through amplitude amplification-based methodologies applied to the Quantum Multi-Armed Bandits (Q-MAB) problem \cite{wang2021quantum,casale2020quantum, wan2023quantum, wu2023quantum}. Nevertheless, the theoretical underpinnings of the aforementioned Q-MAB approaches do not seamlessly extend to the QRL context since there is no state evolution in bandits.

A recent surge of research interest has been directed towards Quantum Reinforcement Learning (QRL) \cite{jerbi2021quantum,dunjko2017advances,paparo2014quantum}. It is noteworthy to emphasize that the aforementioned studies do not provide an exhaustive mathematical characterization of regret. \cite{zhongprovably, ganguly2023quantum} demonstrated that QRL can provide logrithmic regret in the episodic MDP settings. Ours is the first study of infinite horizon MDP with average reward objective. 


\section{Problem Formulation}\label{sec: prob_formulation}

\noindent We consider the problem of Reinforcement Learning (RL) in an infinite horizon Markov Decision Process characterized by $\mathcal{M} \triangleq (\mathcal{S}, \mathcal{A}, P, r, D)$, wherein $\mathcal{S}$ and $\mathcal{A}$ represent finite collections of states and actions respectively {with $|\mathcal{S}| = S$ and $|\mathcal{A}| = A$}, pertaining to RL agent's interaction with the unknown MDP environment. $P(s' | s,a) \in [0,1]$ denotes the transition probability for next state $s' \in \mathcal{S}$ for a given pair of previous state and RL agent's action, i.e., $(s,a) \in \mathcal{S} \times \mathcal{A}$. Further, $r : \mathcal{S} \times \mathcal{A} \rightarrow [0,1]$ represents the reward collected by the RL agent for state-action pair $(s,a)$. $D$ is the diameter of the MDP $\mathcal{M}$. In the following, we first present the additional quantum transition oracle that the agent could access during its interaction with the unknown MDP environment at every RL round. Here, we would like to emphasize that unknown MDP environment implies that matrix $P$ which encapsulates the transition dynamics of the underlying RL environment is not known beforehand. \\


\noindent \textbf{Quantum Computing Basics:} In this section, we give a brief introduction to the concepts that are most related to our work based on \cite{nielsen2010quantum} and \cite{wu2023quantum}. Consider an $m$ dimensional Hilbert space $\mathbb{C}^m$, a quantum state $|x\rangle = (x_1, ..., x_m)^T$ can be seen as a vector inside the Hilbert space with $\sum_i |x_i|^2 = 1$. Furthermore, for a finite set with $m$ elements $\xi = \{\xi_1, ..., \xi_m\}$, we can assign each $\xi_i \in \xi$ with a member of an orthonormal basis of the Hilbert space $\mathbb{C}^m$ by

\begin{equation}
    \xi_i \mapsto |\xi_i\rangle \equiv e_i
\end{equation}

\noindent where $e_i$ is the $i$th unit vector for $\mathbb{C}^m$. Using the above notation, we can express any arbitrary quantum state $|x\rangle = (x_1, ..., x_m)^T$ by elements in $\xi$ as:
\begin{equation}
    |x\rangle = \sum_{n=1}^m x_n |\xi_n \rangle
\end{equation}
\noindent where $|x\rangle$ is the quantum superposition of the basis $|\xi_1\rangle, ..., |\xi_m\rangle$ and we denote $x_n$ as the amplitude of $|\xi_n\rangle$. To obtain a classical information from the quantum state, we perform a measurement and the quantum state would collapse to any basis  $|\xi_i\rangle$ with probability $|x_i|^2$. In quantum computing, the quantum states are represented by input or output registers made of qubits that could be in superpositions. \\

\noindent \textbf{Quantum transition oracle:} We utilize the concepts and notations of quantum computing in \cite{wang2021quantum,wiedemann2022quantum,ganguly2023quantum,jerbi2022quantum} to construct the quantum sampling oracle of RL environments and capture agent's interaction with the unknown MDP environment.
We now formulate the equivalent quantum-accessible RL environments for our classical MDP $\mathcal{M}$. For an agent at step $t$ in state $s_t$ and with action $a_t$, we construct the quantum sampling oracles {for variables of the next state $P(\cdot|s_t, a_t)$}. Specifically, suppose there are two Hilbert spaces $\mathcal{\bar{S}} = \mathbb{C}^{|\mathcal{S}|}$ and $\mathcal{\bar{A}} = \mathbb{C}^{|\mathcal{A}|}$ containing the superpositions of the classical states and actions. We represent the computational bases for $\mathcal{\bar{S}}$ and $\mathcal{\bar{A}}$ as $\{|s\rangle\}_{s \in \mathcal{S}}$ and $\{|a\rangle\}_{a \in \mathcal{A}}$. We assume that we can implement the quantum sampling oracle {$\mathcal{U}_P$ of the MDP's transitions as follows:}

{The quantum evaluation oracle for the transition probability (quantum transition oracle) $\mathcal{U}_P$ which at step $t$, returns the superposition over $s' \in \mathcal{S}$ according to $P(s' | s_t, a_t)$, the probability distribution of the next state given the current state $|s_t\rangle$ and action $|a_t\rangle$ is defined as:}
    $$
    \mathcal{U}_P : |s_t\rangle \otimes|a_t\rangle \otimes |0\rangle \rightarrow |\bf{\psi}\rangle^t
    $$
    {Where $|\bf{\psi}\rangle^t = |s_t\rangle \otimes |a_t\rangle \otimes \sum_{s' \in \mathcal{S}} \sqrt{P(s'|s_t,a_t)} |s'\rangle$.}

\begin{figure}[H]
    \centering
    \includegraphics[width=0.45\textwidth]{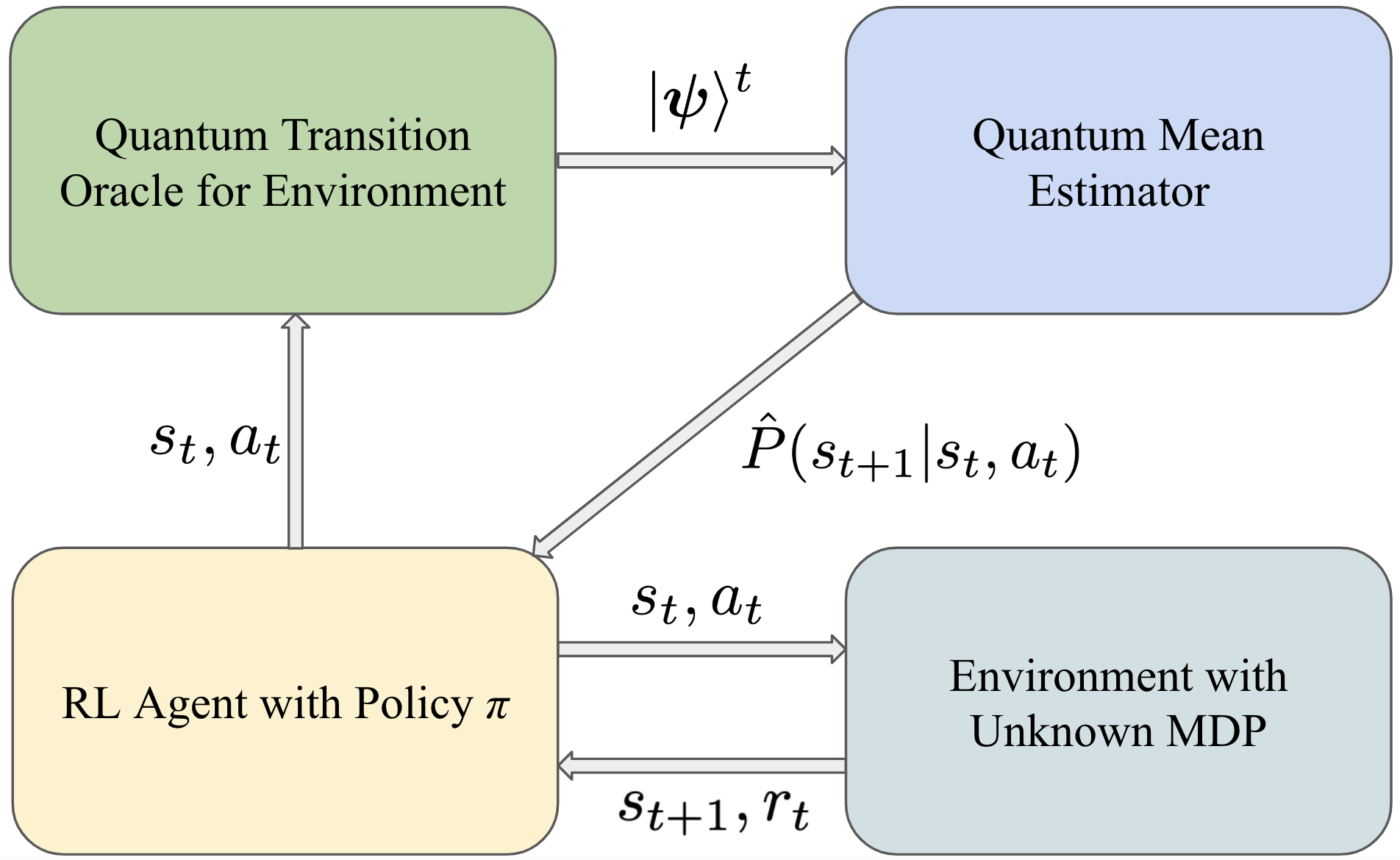}
    \caption{ Agent's interaction at round $t$ with the MDP Environment and accessible quantum transition oracle}
    \label{new_model}
    \vspace{-.1in}
\end{figure}

\noindent As described in Fig \ref{new_model}, at step $t$, the agent plays an action $a_{t}$ based on current state $s_t$ according to its policy $\pi : \mathcal{S} \rightarrow \mathcal{A}$. Consequently, the unknown MDP Q-environment shares the next state information and reward i.e., $\{s_{t+1}, r_t\}$. Additionally, the quantum transition oracle first encodes $s_t$ and $a_t$ into the corresponding basis of the quantum Hilbert spaces, producing basis $|s_t\rangle$ and $|a_t\rangle$, {then it encodes the superpositioned quantum state $|\bf{\psi}\rangle^{t}$ from the quantum transition oracle $\mathcal{U}_P$. $|\bf{\psi}\rangle^{t}$ would be used for the Quantum Mean Estimator, \texttt{QBounded}, which in turn improves transition probability estimation and leads to exponentially better regret. We note that one copy of $|\bf{\psi}\rangle^{t}$ is obtained at each $(s_t,a_t)$. Given that a measurement collapses the state, we can only perform one measurement on it. So, the quantum mean estimator will only perform measurements at epoch ends, as will be described in the algorithm. }\\

\noindent \textbf{RL regret formulation:} Consider the RL agent following some policy $\pi : \mathcal{S} \rightarrow \mathcal{A}$ on the unknown MDP Q-environment  $\mathcal{M}$. Then, as a result of playing policy $\pi$, the agent's long term average reward is defined next according to \cite{agarwal2021concave}.
\begin{align}
    \Gamma_{\pi}^{P} = \lim_{T \rightarrow \infty} \mathbb{E}_{\pi, P} \big[\frac{1}{T}\sum_{t=0}^{T-1} ~r(s_t, a_t) \big], \label{eq: long term avg reward def}
\end{align}
where $\mathbb{E}_{\pi, P}[\cdot]$ is the expectation taken over the trajectory $\{(s_t, a_t)\}_{t \in [0, T-1]}$ by playing policy $\pi$ on $\mathcal{M}$ at every time step. In this context, we state the definitions of value function  for MDP $\mathcal{M}$ next:
\begin{align}
    V_{\pi}^{{P}}(s;\gamma) &= \mathbb{E}_{a \sim \pi} \Big[Q_{\pi}^{{P}}(s,a;\gamma)  \Big], \label{eq: val func 0}\\
    & \hspace{-5mm} = \mathbb{E}_{a \sim \pi} \Big[r(s,a) + \gamma \sum_{s' \in \mathcal{S}} P(s'| s,a) V_{\pi}^{{P}}(s';\gamma) \Big] \label{eq: val func -1}
\end{align}
Then, according to \cite{puterman2014markov}, $\Gamma_{\pi}^{P}$ can be represented as the following:
\begin{align}
    \Gamma_{\pi}^{P} &= \lim_{\gamma \rightarrow 1} (1-\gamma) V_{\pi}^{{P}}(s;\gamma), ~\forall s \in \mathcal{S}, \label{eq: value puterman 0} \\
    & = \sum_{s \in \mathcal{S}} \sum_{a \in \mathcal{A}} \rho_{\pi}^{P} (s,a) \overline{r}(s,a), \label{eq: value puterman 1} 
\end{align}
wherein, $V_{\pi}^{P} (\cdot;\cdot)$ is the discounted cumulative average reward by following policy $\pi$, $\rho_{\pi}^{P}$ is the steady-state occupancy measure (i.e., $\rho_{\pi}^{P}(s,a) \in [0,1], ~ \sum_{s \in \mathcal{S}} \sum_{a \in \mathcal{A}} \rho_{\pi}^{P}(s,a) = 1$), $\bar{r}(s,a)$ is the steady-state long-term average reward of pair $(s,a)$. In the following, we introduce key assumptions crucial from the perspective of our algorithmic framework and its analysis. In this context, we introduce the notations $\{P_{\pi,s}^{t}, T_{\pi, ~s \rightarrow s'} \}$ to denote the $t$ time step probability distribution obtained by playing policy $\pi$ on the unknown MDP $\mathcal{M}$ starting from some arbitrary state $s \in \mathcal{S}$; and, the actual time steps to reach $s'$ from $s$ by playing policy $\pi$ respectively. Next, we state an assumption pertaining to ergodicity of MDP $\mathcal{M}$.  
\begin{assumption}[Finite MDP mixing time] \label{assumption: mixing time}
We assume that unknown MDP environment $\mathcal{M}$ has finite mixing time, which mathematically implies:
    \begin{align}
        T_{\texttt{mix}} \triangleq \max_{\pi, (s,s') \in \mathcal{S} \times \mathcal{S}} ~\mathbb{E} [T_{\pi, ~s \rightarrow s'}] < \infty, 
    \end{align}
    where $T_{\pi, ~s \rightarrow s'}$ implies the number of time steps to reach a state $s'$ from an initial state $s$ by playing some policy $\pi$.
\end{assumption}
\begin{assumption} \label{assumption: known reward} The reward function $r(\cdot, \cdot)$ is known to the RL agent.  
\end{assumption}
We emphasize that Assumption \ref{assumption: known reward} is a commonly used assumption in RL algorithm development owing to the fact that in most setups rewards are constructed according to the underlying problem and is known beforehand  \cite{azar2017minimax,agarwal2019reinforcement}. Next, we define the cumulative regret accumulated by the agent in expectation across $T$ time steps as follows:
\begin{align}
    \mathcal{R}_{[0, T-1]} \triangleq T.\Gamma_{\pi^{*}}^{P} - \mathbb{E} \Big[ \sum_{t=0}^{T-1} r(s_t, a_t)\Big], \label{eq: regret defn}
\end{align}
where the agent generates the trajectory $\{(s_t, a_t\}_{t \in [0, T-1]}$  by playing policy $\pi$ on $\mathcal{M}$, and $\pi^{*}$ is the optimal policy that maximizes the long term average reward defined in Eq. \eqref{eq: long term avg reward def}. Finally, the RL agent's goal is to determine a policy $\pi$ which minimizes $\mathcal{R}_{[0, T-1]}$.
\section{Algorithmic Framework}\label{sec: algorithm}
In this section, we first delineate the key intuition behind our methodology followed by formal description of Q-UCRL algorithm. If the agent was aware of the true transition probability distribution $P$, then it would simply solve the following optimization problem for the optimal policy.
\begin{align}
    & \max_{\{\rho(s,a)\}_{(s,a) \in \mathcal{S} \times \mathcal{A}} } ~\sum_{(s,a) \in \mathcal{S} \times \mathcal{A}} r(s,a) \rho(s,a) \label{eq: opt_obj}
\end{align}

With the following constraints,
\begin{align}
    & \sum_{s,a} \rho(s,a) = 1, \rho(s,a) \geq 0 \quad \forall (s,a) \in \mathcal{S} \times \mathcal{A}\label{eq: transition constraint 1} \\
    & \sum_{a \in \mathcal{A}} \rho(s',a) = \sum_{s, a} P(s'| s,a) \rho(s,a), \forall s' \in \mathcal{S} \label{eq: transition constraint 2}
\end{align}
Note that the objective in Eq. \eqref{eq: opt_obj} resembles the definition of steady-state average reward formulation in  Eq. \eqref{eq: long term avg reward def}. Furthermore, Eq. \eqref{eq: transition constraint 1}, \eqref{eq: transition constraint 2} preserve the properties of the transition model. Consequently, once it obtains $\{\rho^*(s,a)\}_{(s,a) \in \mathcal{S} \times \mathcal{A}}$ by solving Eq. \eqref{eq: opt_obj}, it can compute the optimal policy as:
\begin{align}
    \pi^{*}(a|s) = \frac{\rho^{*}(s,a)}{\sum_{\tilde{a} \in \mathcal{A}} \rho^{*}(s,\tilde{a})} \quad \forall s, a
\end{align}
However, in the absence of knowledge pertaining to true $P$, we propose to solve an optimistic version of Eq. \eqref{eq: opt_obj} that will update RL agent's policy in epochs. Specifically, at the start of an epoch $e$, the following optimization problem will be solved:


\begin{align}
    & \max_{\{\rho(s,a), P_e (\cdot|s,a)\}_{(s,a) \in \mathcal{S} \times \mathcal{A}} } ~\sum_{(s,a) \in \mathcal{S} \times \mathcal{A}} r(s,a) \rho(s,a), \label{eq: optimistic opt 0}
\end{align}
With the following constraints,
\begin{align}
    & \sum_{s,a} \rho(s,a) = 1, \rho(s,a) \geq 0 \quad \forall (s,a) \in \mathcal{S} \times \mathcal{A}\\
    & \sum_{s' \in \mathcal{S}} P_e(s'|s,a) = 1, P_e(\cdot|s,a) \geq 0 ~\forall (s,a) \in \mathcal{S} \times \mathcal{A} \\
    &\sum_{a \in \mathcal{A}} \rho(s',a)  = \sum_{(s,a) \in \mathcal{S} \times \mathcal{A}} {P}_{e}(s'| s,a) \rho(s,a), \forall s' \in \mathcal{S} \label{eq:mainnon}\\
    &  \|{P}_e(\cdot | s,a) - \hat{P}_e(\cdot|s,a) \|_{1} \leq \frac{7 S \log(S^2 A t_e)}{\max\{1,N_{e}(s,a)\}} \label{eq: optimistic opt -1}
\end{align}
 Observe that Eq. \eqref{eq: optimistic opt 0} increases the search space for policy in a  carefully computed neighborhood of certain transition probability estimates $\hat{P}$ via Eq. \eqref{eq: optimistic opt -1} at the start of epoch $e$. 
 Also in Eq. \eqref{eq: optimistic opt -1}, {recall that} $S = |\mathcal{S}|$ is the number of states, $A = |\mathcal{A}|$ is the number of actions, $N_e(s,a)$ is the number of visitations of state-action pair $(s,a)$ upto the end of epoch $e$. Note that, solving Eq. \eqref{eq: optimistic opt 0} is equivalent to obtaining an optimistic policy as it extends the search space of original problem Eq. \eqref{eq: opt_obj}. Furthermore, the computation of $\hat{P}_e$ is key to our framework design. Since performing quantum mean estimation to quantum samples would require measuring and results in quantum samples collapsing, each quantum sample can only be used once. We emphasize that the quantum mean estimator will only run at the end of the epoch, performing measurements to update the transition probability for the next epoch. 
 
 {\bf Details on Quantum Mean Estimator: } At the end of the epoch, we use all $(s_i, a_i)$ observed in the samples, and perform quantum mean estimation on all the samples $|\bf{\psi}\rangle^{t}$  obtained for that $(s_i, a_i)$.  More precisely, {$\tilde{P}_{e}(\cdot|s_i,a_i)$ is the quantum mean estimation for the random variable $P_{e}(\cdot|s_i,a_i)$ performed by \texttt{QBounded} (Algorithm \ref{algo: QBounded}) using the quantum samples $\{|{\psi}\rangle\}^e_i$ observed in the previous epoch with the total number being  $\nu_{e-1}(s_i,a_i)$:}
 \begin{equation} \label{eq: quantum sample notation} 
    {\{|{\psi}\rangle\}^e_i = \{|{\psi}\rangle^t \,| \; t_{e-1}^{start} \leq t \leq t_e^{start}-1, (s_t, a_t) = (s_i, a_i)\}}
\end{equation} 
where  $t_{e}^{start}$ is the starting step of epoch $e$.  Using these samples, the transition probability is obtained as
\begin{equation} \label{eq: p tilde formula} 
    {\tilde{P}_e(\cdot|s_i,a_i) = \texttt{QBounded}_S(\{P_e(\cdot|s_i,a_i), L_2, n_e^{s_i,a_i}, \delta_e) }
\end{equation}

For a specific state-action pair $(s_i, a_i)$, we construct an estimator $\hat{P}_e(\cdot|s_i,a_i)$ used in epoch $e$ that consists of a weighted average of the estimator $\hat{P}_{e-1}(\cdot|s_i,a_i)$ and the estimator obtained using the quantum samples in the latest completed epoch $\tilde{P}_{e}(\cdot|s_i,a_i)$ as follows:

 \begin{equation} \label{eq: p hat formula} 
  \hat{P}_e(\cdot|s_i,a_i) =
    \begin{cases}
      0 & \text{if $e = 1$} \\
      \tilde{P}_e(\cdot|s_i,a_i) & \text{if $e = 2$}\\
      \hat{w}\hat{P}_{e-1}(\cdot|s_i,a_i) + \tilde{w}\tilde{P}_e(\cdot|s_i,a_i) & \text{if $e \geq 3$}\\
    \end{cases}    
\end{equation}

where

\begin{equation} \label{eq: weighted mean1} 
    \hat{w} = \frac{N_{e-1}(s_i, a_i)}{N_{e-1}(s_i, a_i)+\nu_{e-1}(s_i, a_i)}
\end{equation}

\begin{equation} \label{eq: weighted mean2} 
    \tilde{w} = \frac{\nu_{e-1}(s_i, a_i)}{N_{e-1}(s_i, a_i)+\nu_{e-1}(s_i, a_i)}
\end{equation}

{In particular, given that $\nu_{e-1}(s_i,a_i)$ is the maximum number of quantum experiments we can perform in Eq. \eqref{eq: p tilde formula}, we set 
\begin{equation} \label{n value} 
    n_e^{s_i,a_i} = \left\lfloor \frac{\nu_{e-1}(s_i,a_i)}{c\log ^ {1/2}(T \sqrt{S})}\right\rfloor
\end{equation}
for $c \in \mathbb{R}$ satisfying $cn\log^{1/2}(n\sqrt{S}) \geq f(n,S)  \;\forall n$ where $f(n, S)$ is defined in Lemma \ref{lem: SubGau}.  Such choice of $c$ would allow $\nu_e(s,a)$ samples to be used  for analysis of our proposed algorithm. Note that the choice of $c$ is achievable because $cn\log^{1/2}(n\sqrt{S})$ and $f(n,S)$ are  the same order. Since $\mathbb{E}[||P_e(\cdot|s_i,a_i)||_2] \leq 1$, we set $L_2 = 1.$} Different from the works in classical approaches based on \cite{jaksch2010near}, our algorithm utilizes each quantum sample only once for estimating all $\hat{P}_e$ throughout the entire process. This strategy fits with our quantum framework since quantum samples would collapse once being measured.
 In Appendix A, we show that Eq. \eqref{eq: optimistic opt 0} is a convex optimization problem, therefore standard optimization solvers can be employed to solve it. Finally, the agent's policy at the start of epoch $e$ is given by:  
\begin{equation}
    \pi_{e}(a|s) = \frac{\rho_{e}(s,a)}{\sum_{\tilde{a} \in \mathcal{A}} \rho_{e}(s,\tilde{a})} \label{eq: epoch policy}
\end{equation}
Next, we formally present Q-UCRL in Algorithm \ref{algo: QUCRL}. 

\if 0\begin{algorithm}[ht]
\caption{Q-UCRL}
\label{algo: QUCRL}
\begin{algorithmic}[1]
\State \textbf{Inputs:} $\mathcal{S}$, $\mathcal{A}$, $r(\cdot, \cdot)$.
	\State Set $t \leftarrow 1, ~e \leftarrow 1, ~\delta_e \leftarrow 0, t_e \leftarrow 0, t_e^{start} \leftarrow 1$
\end{algorithmic}
\end{algorithm} 
\fi 
\begin{algorithm}[ht]
\caption{Q-UCRL}
\label{algo: QUCRL}
\begin{algorithmic}[1]
	\STATE \textbf{Inputs:} $\mathcal{S}$, $\mathcal{A}$, $r(\cdot, \cdot)$.
	\STATE Set $t \leftarrow 1, ~e \leftarrow 1, ~\delta_e \leftarrow 0, t_e \leftarrow 0, t_e^{start} \leftarrow 1$
 \STATE Set $\mu(s, a, s') \leftarrow 0 ~\forall (s,a,s') \in \mathcal{S} \times \mathcal{A} \times \mathcal{S}$.
 \STATE Set $\hat{P}_e(\cdot|s,a) \leftarrow 0 ~\forall (s,a) \in \mathcal{S} \times \mathcal{A}$.
     \FOR{$(s,a) \in \mathcal{S} \times \mathcal{A}$}
     \STATE Set $\nu_{e}(s,a) \leftarrow 0, ~N_{e}(s,a) \leftarrow 0$.
     \ENDFOR
    \STATE Obtain $\pi_{e}$ by solving Eq. \eqref{eq: optimistic opt 0} and Eq. \eqref{eq: epoch policy}.
	\FOR {$t=1,2, \dots $}
     \STATE Observe state $s_t$ and play action  $a_{t} \sim \pi_{e}(\cdot | s_t)$.
     \STATE Observe $s_{t+1}, r(s_t, a_t)$ and quantum samples $|\bf{\psi}\rangle^{t}$.
     \STATE Update $\nu_{e}(s_t, a_t) \leftarrow \nu_{e}(s_t, a_t) + 1$.
     \STATE Update $\mu(s_t, a_t, s_{t+1}) \leftarrow \nu_{e}(s_t, a_t, s_{t+1}) + 1$.
     \STATE Set $t_e \leftarrow t_e + 1$.
     \IF{$\nu_{e}(s_t, a_t) = \max \{1, N_{e}(s_t,a_t) \}$}
     \FOR{$(s,a) \in \mathcal{S} \times \mathcal{A}$}
     \STATE $N_{e+1}(s,a) \leftarrow N_{e}(s,a) + \nu_{e}(s,a)$.
     \ENDFOR
     \STATE Set $e \leftarrow e + 1, ~\nu_{e}(s,a) \leftarrow 0, ~\delta_e \leftarrow \frac{1}{S^2 A t_e^7}$.
     \STATE Set $t_e \leftarrow 0, \; t_e^{start} \leftarrow t+1$.
     \STATE Obtain $\hat{P}_e(\cdot|s,a)$ by Eq. \eqref{eq: p hat formula} $\forall(s,a) \in \mathcal{S} \times \mathcal{A}$
     \STATE Obtain $\pi_{e}$ by solving Eq. \eqref{eq: optimistic opt 0} and Eq. \eqref{eq: epoch policy}.
     \ENDIF
     \ENDFOR
\end{algorithmic}
\end{algorithm} 


Algorithm \ref{algo: QUCRL} proceeds in epochs, and each epoch $e$ contains multiple rounds of RL agent's interaction with the environment. At each $t$, by playing action $a_t$ for insantaneous state $s_t$ through current epoch policy $\pi_{e}$, the agent collects next state configuration consisting of classical signals $s_{t+1}, r(s_t, a_t)$ and a quantum sample $|\bf{\psi}\rangle^{(t)}$. Additionally, the agent keeps track of its visitations to the different states via the variables $\{\nu_{s, a}, \mu_{s, a, s'}\}$. Note that a new epoch is triggered at the ``if" block of Line 15 in algorithm \ref{algo: QUCRL}, which in turn is aligned to the widely used concept of \textit{doubling trick} used in model-based RL algorithm design. Finally, the empirical visitation counts and probabilities up to epoch $e$, i.e., $\{N_{e+1}(s,a), \hat{P}_{e+1}(s'|s,a)\}$ attributes are updated (line 15-21), and the policy is updated via re-solving from Eq. \eqref{eq: optimistic opt 0} - \eqref{eq: optimistic opt -1} and using Eq. \eqref{eq: epoch policy}.
 
\section{Theoretical Results}\label{sec: theory}
In this section, we characterize the cumulative regret for $T$ rounds by running Q-UCRL in a unknown MDP hybrid Q-Environment. In the following, we first present a crucial result bounding the error accumulated till $e$ epochs between the true transition probabilities $\{P(\cdot|s,a) \}$ and the agent's estimates $\{\hat{P}_e(\cdot|s,a) \}$.

\begin{lemma} \label{lemma: P tilde bound} Execution of Q-UCRL (Algorithm \ref{algo: QUCRL}) up to the beginning of epoch $e+1$ with total $e$ completed epochs comprising $t = 1, 2, \ldots, t_e$ rounds,  guarantees that the following holds for $\Tilde{P}_{e+1}$:
\begin{align} \label{eq: P tilde bound}
    \mathbb{P}\bigg[|\tilde{P}_{e+1}(s'|s, a) - P(s'|s,a)| \leq {\frac{C \log(S/\delta)}{\nu_e(s,a)}}\bigg] \geq 1 - \delta
\end{align}
$\forall (s,a,s') \in \mathcal{S} \times \mathcal{A} \times \mathcal{S}$, where $C = {c\log^{1/2}(T\sqrt{S})}$ for some $c \in \mathbb{R}$ {defined in Eq. \eqref{n value}} and $\{ \nu_{e}(s,a) \}$ are the state-action pair visitation counts up in epoch $e$, as depicted in Line 12 of Algorithm \ref{algo: QUCRL}, and $\{P(s'|s,a), \tilde{P}_{e+1}(s'|s,a) \}$ are the actual and estimated transition probabilities by Eq. \eqref{eq: p tilde formula}.

\begin{proof}
    {For all $(s,a) \in \mathcal{S} \times \mathcal{A}$ , since $\tilde{P}_{e+1}$ is obtained by Eq. \eqref{eq: p tilde formula}, we obtain using Lemma \ref{lem: SubGau}:}

    \begin{align} \label{eq: original form}
        \mathbb{P}\bigg[{||\tilde{P}_{e+1}(\cdot|s, a) - P(\cdot|s,a)||_{\infty} \leq \frac{ \log(S/\delta)}{n_{e+1}^{s_i,a_i}}}\bigg] \geq 1 - \delta,
    \end{align}
    {By switching $n_e^{s_i,a_i}$ with $\nu_e{(s,a)}/C$, Eq. \eqref{eq: original form} gives:}
    \begin{align} \label{eq: inf norm bound}
        {\mathbb{P}\bigg[||\tilde{P}_{e+1}(\cdot|s, a) - P(\cdot|s,a)||_{\infty} \leq \frac{ C\log(S/\delta)}{\nu_e(s,a)}\bigg] \geq 1 - \delta}
    \end{align}
    {Eq. \eqref{eq: inf norm bound} can be transformed into Eq. \eqref{eq: P tilde bound} by taking the entry-wise expression.}
    
\end{proof}
\end{lemma}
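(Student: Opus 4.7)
The plan is to recognize that $\tilde{P}_{e+1}(\cdot|s,a)$ is produced by a single invocation of $\texttt{QBounded}_S$ as defined in Eq.~\eqref{eq: p tilde formula}, so Lemma~\ref{lem: SubGau} can be applied off the shelf and then its $\ell_\infty$ guarantee can be downgraded to the entrywise guarantee appearing in the statement. Concretely, I would view $P(\cdot|s,a)$ as an $S$-dimensional random variable taking values in the probability simplex, note that $\|P(\cdot|s,a)\|_2 \le \|P(\cdot|s,a)\|_1 = 1$ so that $L_2 = 1$ is admissible, and instantiate Lemma~\ref{lem: SubGau} with $d = S$, $n = n_{e+1}^{s,a}$ and confidence $\delta$.

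Applying Lemma~\ref{lem: SubGau} directly then yields
\begin{align*}
\mathbb{P}\!\left[\|\tilde{P}_{e+1}(\cdot|s,a) - P(\cdot|s,a)\|_\infty \le \tfrac{\log(S/\delta)}{n_{e+1}^{s,a}}\right] \ge 1-\delta.
\end{align*}
The second step is to translate $n_{e+1}^{s,a}$ into the visitation count $\nu_e(s,a)$ from Algorithm~\ref{algo: QUCRL}. By Eq.~\eqref{n value}, $n_{e+1}^{s,a} \ge \nu_e(s,a)/\bigl(c\log^{1/2}(T\sqrt{S})\bigr)$ after absorbing the floor into $c$, so $1/n_{e+1}^{s,a} \le C/\nu_e(s,a)$ with $C = c\log^{1/2}(T\sqrt{S})$. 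Substituting gives the desired $C\log(S/\delta)/\nu_e(s,a)$ bound in $\ell_\infty$. Finally, for any fixed $s' \in \mathcal{S}$ the coordinate deviation $|\tilde{P}_{e+1}(s'|s,a) - P(s'|s,a)|$ is dominated by $\|\tilde{P}_{e+1}(\cdot|s,a) - P(\cdot|s,a)\|_\infty$ and therefore inherits the same high-probability bound, giving Eq.~\eqref{eq: P tilde bound}.

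The only point where care is needed — and what I expect to be the main obstacle — is verifying that the quantum-experiment budget implicit in Lemma~\ref{lem: SubGau} is actually available. Lemma~\ref{lem: SubGau} consumes $\mathcal{O}(n\log^{1/2}(n\sqrt{S}))$ quantum experiments, but by Eq.~\eqref{eq: quantum sample notation} only $\nu_e(s,a)$ copies of $|\bm{\psi}\rangle^t$ for the pair $(s,a)$ were collected in epoch $e$, and measurement collapses each copy so it can be used at most once. The specific choice $n_{e+1}^{s,a} = \lfloor \nu_e(s,a)/(c\log^{1/2}(T\sqrt{S}))\rfloor$ is designed so that $cn_{e+1}^{s,a}\log^{1/2}(n_{e+1}^{s,a}\sqrt{S}) \le \nu_e(s,a)$, using the monotonicity bound $\log(n_{e+1}^{s,a}\sqrt{S}) \le \log(T\sqrt{S})$ that holds because $n_{e+1}^{s,a} \le \nu_e(s,a) \le t_e \le T$. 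Once this compatibility check is in place, the argument is a direct plug-in and no further concentration machinery is required.
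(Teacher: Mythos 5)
Your proposal is correct and follows essentially the same route as the paper: instantiate Lemma~\ref{lem: SubGau} with $d=S$, $L_2=1$ and $n=n_{e+1}^{s,a}$, substitute $n_{e+1}^{s,a}\approx\nu_e(s,a)/C$ from Eq.~\eqref{n value}, and pass from the $\ell_\infty$ bound to the entrywise bound. Your added remarks on handling the floor and on checking that the quantum-experiment budget $\mathcal{O}(n\log^{1/2}(n\sqrt{S}))$ fits within the $\nu_e(s,a)$ collected copies are points the paper only addresses implicitly when defining $n_e^{s_i,a_i}$, but they do not change the argument.
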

 Note that the choice of $C$ in Eq. \eqref{eq: P tilde bound} is to ensure that the samples $\nu_e(s,a)$ are sufficient for performing Algorithm \ref{algo: QBounded}.
\begin{lemma} \label{lemma: P hat bound} Execution of Q-UCRL (Algorithm \ref{algo: QUCRL}) up to the beginning of epoch $e+1$ with total $e$ completed epochs comprising $t = 1, 2, \ldots, t_e$ rounds,  guarantees that the following holds for $\Hat{P}_{e+1}$:
\begin{align} \label{eq: P hat bound}
    \mathbb{P}\bigg[|\hat{P}_{e+1}(s'|s, a) - P(s'|s,a)| \leq \frac{eC \log(e{S}/\delta)}{N_{e+1}(s,a)}\bigg] \geq 1 - \delta
\end{align}
$\forall (s,a,s') \in \mathcal{S} \times \mathcal{A} \times \mathcal{S}$, where $C ={c\log^{1/2}(T\sqrt{S})}$ for some $c \in \mathbb{R}$ and $\{ \nu_{e}(s,a) \}$ are the state-action pair visitation counts up in epoch $e$, as depicted in Line 12 of Algorithm \ref{algo: QUCRL}, and $\{P(s'|s,a), \hat{P}_{e+1}(s'|s,a) \}$ are the actual and weighted average of transition probabilities by Eq. \eqref{eq: p hat formula}.

\begin{proof}
    Please refer to Appendix C.
\end{proof}
\end{lemma}

\noindent Lemma \ref{lemma: P tilde bound} and Lemma \ref{lemma: P hat bound} provide the acceleration for mean estimation, which further improve global convergence rate.

\begin{lemma} \label{lemma: model_err_SA} Execution of Q-UCRL (Algorithm \ref{algo: QUCRL}) up to total $e$ epochs comprising $t = 1, 2, \ldots, t_e$ rounds,  guarantees that the following holds for confidence parameter $\delta_e = \frac{1}{S A t_e^7}$ with probability at least $1-1/t_{e}^6$:
\begin{align}
    &\|{P}_{e}(\cdot | s,a ) - {P}(\cdot | s,a ) \|_1, \nonumber\\
    &\leq  \frac{7 S (1+Ce)\log(S^2 A T)+ SCelog{(eS)}}{\max\{1,N_{e}(s,a)\}}, \label{eq: model_est_err}
\end{align}
$\forall (s,a) \in \mathcal{S} \times \mathcal{A}$, where $\{ N_{e-1}^{(s,a)} \}$ is the cumulative state-action pair visitation counts up to epoch $e$, as depicted in Line 17 of Algorithm \ref{algo: QUCRL}, and $\{P(\cdot|s,a), P_e(\cdot|s,a) \}$ are the actual and estimated transition probabilities.
\end{lemma}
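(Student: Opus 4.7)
My plan is to bound $\|P_e(\cdot|s,a) - P(\cdot|s,a)\|_1$ via the triangle inequality
\[
\|P_e(\cdot|s,a) - P(\cdot|s,a)\|_1 \le \|P_e(\cdot|s,a) - \hat P_e(\cdot|s,a)\|_1 + \|\hat P_e(\cdot|s,a) - P(\cdot|s,a)\|_1,
\]
so that the two terms can be controlled by two different mechanisms: the first term by the feasibility of $P_e$ in the optimistic program, and the second term by Lemma \ref{lemma: P hat bound} combined with union bounds.

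For the first term, I would invoke the fact that $P_e$ is a feasible point of the program defined by Eqs. \eqref{eq: optimistic opt 0}--\eqref{eq: optimistic opt -1}, so by constraint \eqref{eq: optimistic opt -1} we get the deterministic bound
\[
\|P_e(\cdot|s,a) - \hat P_e(\cdot|s,a)\|_1 \le \frac{7S\log(S^2 A t_e)}{\max\{1, N_e(s,a)\}}
\]
for every $(s,a)$, with no probability cost. For the second term, I would apply Lemma \ref{lemma: P hat bound} entrywise with confidence parameter $\delta_e = 1/(SA t_e^7)$ to get, for a fixed $(s,a,s')$ and with probability at least $1-\delta_e$,
\[
\bigl|\hat P_e(s'|s,a) - P(s'|s,a)\bigr| \le \frac{eC\log(eS/\delta_e)}{N_e(s,a)}.
\]
Summing over $s'\in\mathcal{S}$ converts the entrywise bound to an $\ell_1$ bound, introducing a factor of $S$. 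A union bound across all $S^2 A$ triples $(s,a,s')$ (or equivalently $SA$ pairs $(s,a)$ after aggregating the $S$ next states) yields failure probability at most $S^2A\delta_e = S/t_e^7 \le 1/t_e^6$, matching the probability in the lemma statement.

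Adding the two bounds and substituting $\delta_e = 1/(SAt_e^7)$ gives $\log(eS/\delta_e) = \log(eS) + \log(SAt_e^7) \le \log(eS) + 7\log(S^2 A T)$, so the combined bound reads
\[
\frac{7S\log(S^2AT)+ 7SCe\log(S^2AT)+SCe\log(eS)}{\max\{1,N_e(s,a)\}} = \frac{7S(1+Ce)\log(S^2AT) + SCe\log(eS)}{\max\{1,N_e(s,a)\}},
\]
which is exactly Eq. \eqref{eq: model_est_err}. The routine but slightly fiddly part will be getting the logarithmic manipulation clean and bookkeeping the union-bound failure probability so that it collapses to the advertised $1/t_e^6$; the conceptual work is minimal since Lemmas \ref{lemma: P tilde bound}--\ref{lemma: P hat bound} already absorb the quantum mean-estimation analysis, and the feasibility of $P_e$ with respect to \eqref{eq: optimistic opt -1} handles the other half for free.
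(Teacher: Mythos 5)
Your proposal is correct and follows essentially the same route as the paper: the same triangle-inequality split, the same use of the feasibility constraint \eqref{eq: optimistic opt -1} for the $\|P_e - \hat P_e\|_1$ term, and the same concentration-plus-union-bound treatment of $\|\hat P_e - P\|_1$ (the paper re-derives the entrywise bound from Lemma \ref{lem: SubGau} via the substitution $\delta = e^{-\epsilon N_e(s,a)}$ rather than citing Lemma \ref{lemma: P hat bound} directly, but the content is identical). The only cosmetic difference is that the paper also sums the failure probability over the $t_e$ possible values of $N_e(s,a)$, which is why its per-event confidence is $1/(S^2At_e^7)$; your aggregation over $s'$ first achieves the same $1/t_e^6$ total.
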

\begin{proof}
Please refer to Appendix D.
\end{proof}
\begin{remark} \label{remark: bad event}
    Note that for ${P}_{e}(\cdot | s,a ),{P}(\cdot | s,a )$ we have the bound: $\|{P}_{e}(\cdot | s,a ) - {P}(\cdot | s,a ) \|_1 \leq 2 S$, in the worst case scenario. However, the probability that a \textit{bad event} (i.e., complementary of Eq. \eqref{eq: model_est_err}) happens is at most $\frac{1}{t_e^6}$.
\end{remark}
\begin{lemma}
    [Regret Decomposition] Regret of Q-UCRL defined in Eq. \eqref{eq: regret defn} can be decomposed as follows:
    \begin{align}
        \mathcal{R}_{[0, T-1]} \leq \sum_{e = 1}^{E} T_{e} \Big(\Gamma_{\pi_{e}}^{\tilde{P}_{e}} - \Gamma_{\pi_{e}}^{{P}} \Big). \label{eq: regr decomp final}
    \end{align}
\end{lemma}
\begin{proof}
    Please refer to Appendix E.
\end{proof}
{\bf Regret in terms of Bellman Errors: } Here, we introduce the notion of Bellman errors $B_{\pi}^{{P}_{e}} (s,a)$ at epoch $e$ as follows:
\begin{align}
    B_{\pi}^{{P}_{e}} (s,a) \triangleq  \lim_{\gamma \rightarrow 1} &~[Q_{\pi}^{{P}_e}(s,a;\gamma) - r(s,a) \nonumber \\
    & - \gamma \sum_{s' \in \mathcal{S}}P(s'|s,a)V_{\pi}^{{P}_e}(s;\gamma)], \label{eq: bellman err}
\end{align}
which essentially captures the gap in the average rewards by playing action $a$ in state $s$ accordingly to some policy $\pi$ in one step of the optimistic transition model $P_e$ w.r.t. the actual probabilities $P$. This leads us to the next result that ties the regret expression in Eq. \eqref{eq: regr decomp final}  and the Bellman errors.

 \begin{lemma}  [Lemma 5.2, \cite{agarwal2021concave}] \label{lemma: regr bellman}
The difference in long-term expected reward by playing optimistic policy of epoch $e$, i.e., $\pi_e$ on the optimistic transition model estimates, i.e., $\Gamma_{\pi_{e}}^{{P}_{e}}$, and the expected reward collected by playing $\pi_e$ on the actual model $P$, i.e., $\Gamma_{\pi_{e}}^{{P}}$ is the long-term average of the Bellman errors. Mathematically, we have:  
\begin{align}
    \Gamma_{\pi_{e}}^{\tilde{P}_{e}} - \Gamma_{\pi_{e}}^{{P}} = \sum_{s,a} \rho_{\pi_{e}}^{P}(s,a) B_{\pi_e}^{{P}_{e}} (s,a), \label{eq: bellman err new} 
\end{align}

where $B_{\pi}^{{P}_{e}} (s,a)$ is defined in Eq. \eqref{eq: bellman err}.
\end{lemma}
With Eq. \eqref{eq: bellman err new}, it necessary to mathematically bound  the Bellman errors. To this end, we leverage the following result from \cite{agarwal2021concave} which is stated as Lemma 10 from Appendix F characterizing the Bellman errors in terms of $\tilde{h}(\cdot)$, a quantity that measures the bias contained in the optimistic MDP model.

\begin{align}
    B^{\pi_e, {P}_e}(s,a) \leq \Big\|{P}_e(\cdot|s,a) - {P}(\cdot|s,a) \Big\|_{1} \|\tilde{h}(\cdot) \|_{\infty},  \label{eq: bellman h lemma result 0}
\end{align}
where $\tilde{h}(\cdot)$ (defined in \cite{puterman2014markov}) is the bias of the optimistic MDP $P_e$ for which the optimistic policy $\pi_e$ obtains the maximum expected reward in the confidence interval.
Using the bound of the error arising from estimation of transition model in Lemma \ref{lemma: model_err_SA} into Eq. \eqref{eq: bellman h lemma result 0}, we obtain the following result with probability $1-1/t_e ^6$:
    \begin{align}
        & B^{\pi_e, {P}_e}(s,a) \nonumber \\
        & \leq 
        \frac{7 S (1+Ce)\log(S^2 A T)+ SCe \log{(eS)}}{\max\{1,N_{e}(s,a)\}} \|\tilde{h}(\cdot)\|_{\infty}, \label{eq: bellman h lemma result}
    \end{align}

Consequently, it is necessary to bound the model bias error term in Eq. \eqref{eq: bellman h lemma result}. We recall that the optimistic MDP $P_e$ corresponding to epoch $e$ maximizes the average reward in the confidence set as described in the formulation of Eq. \eqref{eq: optimistic opt 0}. In other words, $\Gamma_{\pi_e}^{P_e} \geq \Gamma_{\pi_e}^{P'}$ for every $P'$ in the confidence interval. Specifically, the model bias for optimistic MDP can be explicitly defined in terms of Bellman equation \cite{puterman2014markov}, as follows: 
    \begin{align}
        \tilde{h}(s) & \triangleq r_{\pi_e}(s) - \lambda_{\pi_e}^{P_e} + \Big(P_{e, \pi_e}(\cdot | s) \Big)^{T}\tilde{h}(\cdot), \label{eq: model bias main}
    \end{align}
    where the following simplified notations are used: $r_{\pi_e}(s) = \sum_{a \in \mathcal{A}} \pi_e (a|s) r(s,a)$, $P_{e, \pi_e}(s'| s) = \sum_{a \in \mathcal{A}} \pi_e (a|s) P_e (s'|s,a)$. Next, we incorporate the following result from \cite{agarwal2021concave} stated as Lemma 11 in Appendix F correlating model bias $\tilde{h}(\cdot)$ to the mixing time $T_{\texttt{mix}}$ of MDP $\mathcal{M}$.
\begin{align}
        \tilde{h}(s) - \tilde{h}(s') \leq T_{\texttt{mix}}, \forall s,s' \in \mathcal{S}.
    \end{align}
Using bound of bias term $\tilde{h}(\cdot)$ into Eq. \eqref{eq: bellman h lemma result}, we get the final bound for Bellman error with probability $1- 1/T^6$ as:
\begin{equation}
\small
         B^{\pi_e, {P}_e}(s,a) \leq \frac{ 7 S (1+Ce)\log(S^2 A T)+ SCe \log{(eS)}}{\max\{1,N_{e}(s,a)\}}  T_{\texttt{mix}}. \label{eq: bellman final bound}
\end{equation}

Note that Q-UCRL obtains a quadratic improvement for the Bellman error over the classical UC-UCRL \cite{agarwal2021concave}. We next present the final regret bound of Q-UCRL.
\begin{theorem} \label{thm: main_regr} In an unknown MDP environment $\mathcal{M} \triangleq (\mathcal{S}, \mathcal{A}, P, r, D)$, the  regret incurred by Q-URL (Algorithm \ref{algo: QUCRL}) across $T$ rounds is bounded as follows:
\begin{equation}
\small
    \mathcal{R}_{[0, T-1]} = \mathcal{O} \Bigg(\textstyle{{ \! S^5 A^4 T_{\texttt{mix}} {\log^3\bigg(\!\frac{T}{SA}\!\bigg) \log^{\frac{1}{2}}(T \sqrt{S}) \log(S^2AT)}} }\!\!\Bigg).
\end{equation}   
\end{theorem}

\begin{proof}
Please refer to Appendix G.
\end{proof}
\if 0
\begin{remark}
    It is worth highlighting that our theoretical analysis presents a new approach for regret characterization of model-based QRL algorithms which is independent of martingale-based concentration bound results extensively used in analysis of classical model-based RL algoritrhms. 
\end{remark}
\fi 

\begin{remark}
    We reiterate that our characterization of regret incurred by Q-UCRL in Theorem \ref{thm: main_regr} is a martingale-free analysis and thus deviates from its classical counterparts \cite{fruit2018efficient,auer2008near,agarwal2021concave}. 
\end{remark}
\section{Conclusions}\label{sec: conclusion}
This paper demonstrates that quantum computing helps provide significant reduction in the regret bounds for infinite horizon reinforcement learning with average rewards. This paper not only unveils the quantum potential for bolstering reinforcement learning but also beckons further exploration into novel avenues that bridge quantum and classical paradigms, thereby advancing the field to new horizons.

A promising future direction is parameterized quantum RL, which has recently shown speedups for discounted settings \cite{xu2025accelerating}; achieving similar gains for average-reward problems, where the state-of-the-art in non-quantum RL is given by \cite{ganesh2025sharper}, remains an open challenge.


\section*{Impact Statement}
This paper presents work whose goal is to advance the field of Machine Learning. There are many potential societal consequences of our work, none which we feel must be specifically highlighted here.

\bibliography{main}
\bibliographystyle{icml2025}

\newpage
\appendix
\onecolumn


\section{Appendix A: Convexity of Optimization Problem}
In order to solve the optimistic optimization problem in Eq. \eqref{eq: optimistic opt 0} - \eqref{eq: optimistic opt -1} for epoch $e$, we adopt the approach proposed in \cite{agarwal2021concave,rosenberg2019online}. The key is that the constraint in \eqref{eq:mainnon} seems non-convex. However, that can be made convex with an inequality as
\begin{equation} \sum_{a \in \mathcal{A}} \rho(s',a)  \le \sum_{(s,a) \in \mathcal{S} \times \mathcal{A}} {P}_{e}(s'| s,a) \rho(s,a), ~\forall s' \in \mathcal{S} \label{eq:issue2}
\end{equation}
This makes the problem convex since $xy\ge c$ is convex region in the first quadrant. 

However, since the constraint region is made bigger, we need to show that the optimal solution satisfies \eqref{eq:mainnon} with equality. This follows since

\begin{equation}
   \sum_{s'\in \mathcal{S}} \sum_{(s,a) \in \mathcal{S} \times \mathcal{A}} {P}_{e}(s'| s,a) \rho(s,a) = 1 = \sum_{s'\in \mathcal{S}} \sum_{a \in \mathcal{A}} \rho(s',a)
\end{equation}

Thus, since we have point-wise inequality in \eqref{eq:issue2} with the sum of the two sides being the same, thus, the equality holds for all $s'$. 

\if 0
Accordingly, we define a new variable $p(s,a,s')$ as the probability that RL agent is in state $s$, subsequently it takes an action $a$ and transitions to next state $s'$. Then, $P_e$ can be expressed as:
\begin{align}
    P_e(s'|s,a) = \frac{p(s,a,s')}{\sum_{s' \in \mathcal{S}} p(s,a,s')}, \forall (s,a) \in \mathcal{S} \times \mathcal{A},
\end{align}
and, the occupancy measures are:
\begin{align}
    \rho(s,a) = \sum_{s' \in \mathcal{S}} p(s,a,s').
\end{align}
Then, the optimization problem $\tilde{\bm{\Xi}}_e$ can be re-written as follows:
\begin{align}
   \Tilde{\bm{\Xi}}_e: & \max_{\{p(s,a,s'), \alpha(s,a,s')\}_{(s,a,s') \in \mathcal{S} \times \mathcal{A} \times \mathcal{S}} } ~\sum_{(s,a) \in \mathcal{S} \times \mathcal{A}} \sum_{s' \in \mathcal{S}} r(s,a) p(s,a,s'), \label{eq: optimistic opt 0 supp}\\
    & ~~\text{s.t.}, \\
    & \sum_{(s,a,s') \in \mathcal{S} \times \mathcal{A} \times \mathcal{S}} p(s,a,s') = 1, \label{eq: optimistic const 1 supp} \\
    & \sum_{(s,a) \in \mathcal{S} \times \mathcal{A}} p(s',a, s)  = \sum_{(s,a) \in \mathcal{S} \times \mathcal{A}} p(s,a,s'), ~\forall s' \in \mathcal{S}, \label{eq: optimistic const 2 supp} \\
    & p(s,a,s') - \hat{P}_e(s'|s,a) {\sum_{s' \in \mathcal{S}} p(s,a,s')} \leq \alpha(s,a,s'), ~\forall (s,a,s') \in \mathcal{S} \times \mathcal{A} \times \mathcal{S}, \label{eq: optimistic const 3 supp} \\
    & p(s,a,s') - \hat{P}_e(s'|s,a) {\sum_{s' \in \mathcal{S}} p(s,a,s')} \geq \alpha(s,a,s'), ~\forall (s,a,s') \in \mathcal{S} \times \mathcal{A} \times \mathcal{S}, \label{eq: optimistic const 4 supp}\\
    & \sum_{s' \in \mathcal{S}}  \alpha(s,a,s') \leq \frac{7 S \log(S^2 A t_e)}{\max\{1,N_{e}(s,a)\}} {\sum_{s' \in \mathcal{S}} p(s,a,s')} \label{eq: optimistic const 5 supp}\\
    & p(s,a,s') \geq 0, \alpha(s,a,s') \geq 0  ~\forall (s,a,s') \in \mathcal{S} \times \mathcal{A} \times \mathcal{S}. \label{eq: optimistic opt -1}
\end{align}
Note that the objective function as well as the all the constraints are now linear in terms of the optimization variables, therefore it is convenient to directly solve $\Tilde{\bm{\Xi}}_e$ using convex optimization program solvers such as CVXPY.
\fi 

\newpage

\section{Appendix B: A brief overview of Quantum Amplitude Amplification}

Quantum amplitude amplification stands as a critical outcome within the realm of quantum mean estimation. This pivotal concept empowers the augmentation of the amplitude associated with a targeted state, all the while suppressing the amplitudes linked to less desirable states. The pivotal operator, denoted as $Q$, embodies the essence of amplitude amplification: $Q = 2|\psi\rangle\langle\psi| - I$, where $|\psi\rangle$ signifies the desired state and $I$ represents the identity matrix. This operator orchestrates a double reflection of the state $|\psi\rangle$ - initially about the origin and subsequently around the hyperplane perpendicular to $|\psi\rangle$ - culminating in a significant augmentation of the amplitude of $|\psi\rangle$.

Moreover, the application of this operator can be iterated to achieve a repeated amplification of the desired state's amplitude, effectively minimizing the amplitudes of undesired states. Upon applying the amplitude amplification operator a total of $t$ times, the outcome is $Q^t$, which duly enhances the amplitude of the desired state by a scaling factor of $\sqrt{N}/M$, where $N$ corresponds to the overall count of states and $M$ signifies the number of solutions.

The implications of quantum amplitude amplification extend across a diverse spectrum of applications within quantum algorithms. By bolstering their efficacy and hastening the resolution of intricate problems, quantum amplitude amplification carves a substantial niche. Noteworthy applications encompass:

\begin{enumerate}

\item  Quantum Search: In the quantum algorithm for unstructured search, known as Grover's algorithm \cite{grover1996fast}, the technique of amplitude amplification is employed to enhance the amplitude of the target state and decrease the amplitude of the non-target states. As a result, this technique provides a quadratic improvement over classical search algorithms.

\item Quantum Optimization: Quantum amplification can be used in quantum optimization algorithms, such as Quantum Approximate Optimization Algorithm (QAOA) \cite{farhi2014quantum}, to amplify the amplitude of the optimal solution and reduce the amplitude of sub-optimal solutions. This can lead to an exponential speedup in solving combinatorial optimization problems.

\item  Quantum Simulation: Quantum amplification has also been used in quantum simulation algorithms, such as Quantum Phase Estimation (QPE) \cite{kitaev1995quantum}, to amplify the amplitude of the eigenstate of interest and reduce the amplitude of other states. This can lead to an efficient simulation of quantum systems, which is an intractable problem in the classical domain.

\item  Quantum Machine Learning: The utilization of quantum amplification is also evident in quantum machine learning algorithms, including Quantum Support Vector Machines (QSVM) \cite{lloyd2013quantum}. The primary objective is to amplify the amplitude of support vectors while decreasing the amplitude of non-support vectors, which may result in an exponential improvement in resolving particular classification problems.
\end{enumerate}

In the domain of quantum Monte Carlo methods, implementing quantum amplitude amplification can enhance the algorithm's efficiency by decreasing the amount of samples needed to compute the integral or solve the optimization problem. By amplifying the amplitude of the desired state, a substantial reduction in the number of required samples can be achieved while still maintaining a certain level of accuracy, leading to significant improvements in efficiency when compared to classical methods. This technique has been particularly useful in \cite{hamoudi2021quantum} for achieving efficient convergence rates in quantum Monte Carlo simulations.
\newpage
\section{Appendix C: Proof of Lemma 3}
\begin{lemma} [Lemma 3 in the main paper] Execution of Q-UCRL (Algorithm \ref{algo: QUCRL}) up to the beginning of epoch $e+1$ with total $e$ completed epochs comprising $t = 1, 2, \ldots, t_e$ rounds,  guarantees that the following holds for $\Hat{P}_{e+1}$:
\begin{align} \label{eq: P hat bound2}
    \mathbb{P}[|\hat{P}_{e+1}(s'|s, a) - P(s'|s,a)| \leq \frac{eC \log(e{S}/\delta)}{N_{e+1}(s,a)}] \geq 1 - \delta
\end{align}
$\forall (s,a,s') \in \mathcal{S} \times \mathcal{A} \times \mathcal{S}$, where $C ={c\log^{1/2}(T\sqrt{S})}$ for some $c \in \mathbb{R}$ and $\{ \nu_{e}(s,a) \}$ are the state-action pair visitation counts up in epoch $e$, as depicted in Line 12 of Algorithm \ref{algo: QUCRL}, and $\{P(s'|s,a), \hat{P}_{e+1}(s'|s,a) \}$ are the actual and weighted average of transition probabilities by Eq. \eqref{eq: p hat formula}.

\begin{proof}
    For all $(s,a,s') \in \mathcal{S} \times \mathcal{A} \times \mathcal{S}$ , $\hat{P}_{e+1}(s'|s,a)$ could be equivalently expressed as:

    \begin{align} \label{eq: equivalent form for hat P}
        \hat{P}_{e+1}(s'|s,a) = \frac{\sum_{i=1}^e\nu_i(s,a)\tilde{P}_i(s'|s,a)}{\sum_{j=1}^e\nu_j(s,a)}
    \end{align}
    Denote event $|\tilde{P}_i(s'|s,a)-P(s'|s,a)| \leq {\frac{C \log(S/\delta)}{\nu_e(s,a)}}$ as event $E_i$. Thus if events $E_1,..., E_{e}$ all occur, we would have:
    \begin{align}
         &  |\hat{P}_{e+1}(s'|s, a) - P(s'|s,a)|\\
        & = \Bigg |\frac{\sum_{i=1}^e\nu_i(s,a)(\tilde{P}_i(s'|s,a) - P(s'|s,a))}{\sum_{j=1}^e\nu_j(s,a)}\Bigg |, \\
        & \leq \frac{\sum_{i=1}^e\nu_i(s,a)|\tilde{P}_i(s'|s,a) - P(s'|s,a)|}{\sum_{j=1}^e\nu_j(s,a)}, \label{eq: lemma3.1}\\
        & \leq \frac{\sum_{i=1}^e C \log({S}/\delta)}{\sum_{j=1}^e\nu_j(s,a)}, \label{eq: lemma3.2}\\
        & = \frac{eC \log({S}/\delta)}{\sum_{j=1}^e\nu_j(s,a)}, \\
        & = \frac{eC \log({S}/\delta)}{N_{e+1}(s,a)}, \label{eq: lemma3.3}
        \end{align}
    Where Eq. \eqref{eq: lemma3.1} is the result of triangle inequality and Eq. \eqref{eq: lemma3.2} is by Lemma \ref{lemma: P tilde bound}. The probability of events $E_1,..., E_{e}$ all occurring is:
    \begin{align}
         &  \mathbb{P}[E_1 \cap ...\cap E_{e}],\\
        & = 1-\mathbb{P}[E_1^C \cup ...\cup E_{e}^C],\\
        & \geq 1-\sum_{i=1}^{e}\mathbb{P}[E_i^C],\\
        & \geq 1-e\delta  \label{eq: lemma3.4}
        \end{align}
    Where Eq. \eqref{eq: lemma3.4} is by Lemma \ref{lemma: P tilde bound}. By combing Eq. \eqref{eq: lemma3.3} and Eq. \eqref{eq: lemma3.4} and substituting $e\delta$ with $\delta$, we would obtain Eq. \eqref{eq: P hat bound2}
\end{proof}
\end{lemma}
\newpage
\section{Appendix D: Proof of Lemma 4}
\begin{lemma} [Lemma 4 in the main paper]  Execution of Q-UCRL (Algorithm \ref{algo: QUCRL}) up to total $e$ epochs comprising $t = 1, 2, \ldots, t_e$ rounds,  guarantees that the following holds for confidence parameter $\delta_e = \frac{1}{S A t_e^7}$ with probability at least $1-1/t_{e}^6$:
\begin{align}
    &\|{P}_{e}(\cdot | s,a ) - {P}(\cdot | s,a ) \|_1, \nonumber\\
    &\leq  \frac{7 S (1+Ce)\log(S^2 A T)+ SCelog{(eS)}}{\max\{1,N_{e}(s,a)\}}, \label{eq: model_est_err2}
\end{align}
$\forall (s,a) \in \mathcal{S} \times \mathcal{A}$, where $\{ N_{e-1}^{(s,a)} \}$ is the cumulative state-action pair visitation counts up to epoch $e$, as depicted in Line 17 of Algorithm \ref{algo: QUCRL}, and $\{P(\cdot|s,a), P_e(\cdot|s,a) \}$ are the actual and estimated transition probabilities.
\end{lemma}
\begin{proof}
To prove the claim in Eq. \eqref{eq: model_est_err2}, note that for an arbitrary pair $(s,a)$ we have:
\begin{eqnarray}
	&& \|{P}_{e}(\cdot | s,a ) - {P}(\cdot | s,a ) \|_1 \nonumber \\
    & \leq& \|{P}_{e}(\cdot | s,a ) - \hat{P}_e(\cdot | s,a ) \|_1  \nonumber\\
    &&+ \|\hat{P}_{e}(\cdot | s,a ) - {P}(\cdot | s,a ) \|_1, \label{eq: model_est_err_0} \\
    & \leq& \frac{7 S \log(S^2 A t_e)}{\max\{1,N_{e}(s,a)\}} + \|\hat{P}_{e}(\cdot | s,a ) - {P}(\cdot | s,a ) \|_1, \label{eq: model_est_err_1}
    \end{eqnarray}
   \begin{eqnarray} 
 & =& \frac{7S\log(S^2 A t_e )}{\max\{1,N_{e}(s,a)\}} \nonumber\\&&+ \sum_{s' \in \mathcal{S}} |\underbrace{\hat{P}_{e}(s' | s,a ) - {P}(s' | s,a )}_{\text{(a)}}|, \label{eq: model_est_err_2}
\end{eqnarray}
where Eq. \eqref{eq: model_est_err_1} is a direct consequence of constraint Eq. \eqref{eq: optimistic opt -1} of optimization problem Eq. \eqref{eq: optimistic opt 0}. Next, in order to analyze (a) in Eq. \eqref{eq: model_est_err_2}, consider the following event:

\begin{align}
    \mathcal{E} &= \Bigg\{|{ \hat{P}_{e}(s' | s,a ) - {P}(s' | s,a )}| \nonumber \\
    &\leq  \frac{7\log(S^2 A t_e)}{\max\{1,N_{e}(s,a)\}}, \forall (s,a,s') \in \mathcal{S} \times \mathcal{A} \times \mathcal{S}\Bigg\},
\end{align}
For a tuple $(s,a,s')$, by replacing $\delta = e^{-\epsilon \cdot N_e(s,a)}$ into Eq. \eqref{eq: Subgaussian} from Lemma \ref{lem: SubGau}, we get:
\begin{align}
    & \mathbb{P}\left[|{\hat{P}_{e}(s' | s,a ) - {P}(s' | s,a )}| \leq \frac{Ce\log(e{S})}{N_e{(s,a)}} +Ce\epsilon \right]  \nonumber \\
    & \quad \quad \geq 1-e^{-\epsilon \cdot N_{e}(s,a)},  \label{eq: model_est_err_new_1}
\end{align}
By plugging in $\epsilon = \frac{\log(S^2 A t_{e}^7)}{N_e(s,a)}$ into Eq. \eqref{eq: model_est_err_new_1}, we get:
\begin{align}
    &\mathbb{P}\left[|{\hat{P}_{e}(s' | s,a ) - {P}(s' | s,a )}| \geq \frac{Ce\log  {(eS)}+Ce\log(S^2 A t_e^7)}{N_e(s,a)} \right] \nonumber \\
    & \quad \quad \leq \frac{1}{S^2 A t_e^7},  \label{eq: model_est_err_new_2}
\end{align}
Finally, we sum over all possible values of $N_e(s,a)$ as well as tuples $(s,a,s')$ to bound the probability in the RHS of \eqref{eq: model_est_err_new_2} that implies failure of event $\mathcal{E}$:
\begin{align}
    \sum_{(s,a,s') \in \mathcal{S} \times \mathcal{A} \times \mathcal{S}}\sum_{N_e(s,a) = 1}^{t_e} \frac{1}{S^2 A t_e^7} \leq \frac{1}{t_e^6} \label{eq: model_est_err_new_3}
\end{align}
Here, it is critical to highlight that we obtain this high probability bound for model errors by careful utilization of Lemma \ref{lem: SubGau} and quantum signals $|\bf{\psi}\rangle^t$ via Eq. \eqref{eq: model_est_err2} - \eqref{eq: model_est_err_new_3}, while avoiding classical concentration bounds for probability norms that use martingale style analysis such as in the work \cite{weissman2003inequalities}. 
 Finally, plugging the bound of term (a) as described by event $\mathcal{E}$, we obtain the following with probability $1-1/t_e^6$:
\begin{align}
 & \|{P}_{e}(\cdot | s,a ) - {P}(\cdot | s,a ) \|_1 \nonumber \\
 & \leq \frac{7 S \log(S^2 A t_e)}{\max\{1,N_{e}(s,a)\}} + \sum_{s' \in \mathcal{S}} \frac{ Ce\log {(eS)} + Ce \log(S^2 A t_e^7)}{\max\{1,N_{e}(s,a)\}}, \\
 & \leq \frac{7 S \log(S^2 A t_e)}{\max\{1,N_{e}(s,a)\}} + \sum_{s' \in \mathcal{S}} \frac{ Ce\log {(eS)} + 7Ce \log(S^2 A t_e)}{\max\{1,N_{e}(s,a)\}}, \\
 & = \frac{7 S (1+Ce)\log(S^2 A t_e)+ SCelog{(eS)}}{\max\{1,N_{e}(s,a)\}}, \\
 & \leq \frac{7 S (1+Ce)\log(S^2 A T)+ SCelog{(eS)}}{\max\{1,N_{e}(s,a)\}}.
\end{align}
This proves the claim as in the statement of the Lemma.
\end{proof}
\newpage
\section{Appendix E: Proof of Lemma 5}
\begin{lemma}
    [Regret Decomposition, Same as Lemma 5 in the main paper] Regret of Q-UCRL defined in Eq. \eqref{eq: regret defn} can be decomposed as follows:
    \begin{align}
        \mathcal{R}_{[0, T-1]} \leq \sum_{e = 1}^{E} T_{e} \Big(\Gamma_{\pi_{e}}^{\tilde{P}_{e}} - \Gamma_{\pi_{e}}^{{P}} \Big).
    \end{align}
\end{lemma}
\begin{proof}
\begin{align}
    & \mathcal{R}_{[0, T-1]} =  T \times \Gamma_{\pi^{*}}^{P} - \mathbb{E} \Big[\sum_{t=0}^{T-1} r(s_t, a_t) \Big], \label{eq: regr decomp 0}\\
    & = T \times \Gamma_{\pi^{*}}^{P} - \sum_{e = 1}^{E} T_{e} \Gamma_{\pi^{*}_{e}}^{P} \nonumber  + \sum_{e = 1}^{E} T_{e} \Gamma_{\pi^{*}_{e}}^{P} - \mathbb{E} \Big[ \sum_{t=0}^{T-1} r(s_t, a_t)\Big], \\
    & = \sum_{e = 1}^{E} T_{e} \big(\underbrace{\Gamma_{\pi^{*}}^{P} - \Gamma_{\pi^{*}_{e}}^{P}}_{\text{(a)}} \big) +  \sum_{e = 1}^{E} T_{e} \Gamma_{\pi^{*}_{e}}^{P} - \mathbb{E} \Big[ \sum_{t=0}^{T-1} r(s_t, a_t)\Big], \label{eq: regr decomp 1}\\
    & = \sum_{e = 1}^{E} T_{e} \Gamma_{\pi^{*}_{e}}^{P} - \mathbb{E} \Big[ \sum_{t=0}^{T-1} r(s_t, a_t)\Big], \\
    & \leq  \sum_{e = 1}^{E} T_{e} \Gamma_{\pi_{e}}^{{P}_{e}} - \mathbb{E} \Big[ \sum_{t=0}^{T-1} r(s_t, a_t)\Big], \label{eq: regr decomp 2} \\
    & =  \sum_{e = 1}^{E} T_{e} \Gamma_{\pi_{e}}^{{P}_{e}} -  \sum_{e = 1}^{E} T_{e} \Gamma_{\pi_{e}}^{{P}} +  \sum_{e = 1}^{E} T_{e} \Gamma_{\pi_{e}}^{{P}} - \mathbb{E} \Big[\sum_{t=0}^{T-1} r(s_t, a_t) \Big], \\
    & =  \sum_{e = 1}^{E} T_{e} \Gamma_{\pi_{e}}^{{P}_{e}} -  \sum_{e = 1}^{E} T_{e} \Gamma_{\pi_{e}}^{{P}} +   \Big[\sum_{e = 1}^{E} \sum_{t = t_{e-1} + 1}^{t_{e}}\underbrace{\Gamma_{\pi_{e}}^{{P}} - \mathbb{E}[r(s_t, a_t)]}_{\text(b)} \Big], \label{eq: regr decomp 3} \\
    & =  \sum_{e = 1}^{E} T_{e} \Big(\Gamma_{\pi_{e}}^{\tilde{P}_{e}} - \Gamma_{\pi_{e}}^{{P}} \Big).  \label{eq: regr decomp -1}
\end{align}
\end{proof}

Firstly, (a) in Eq. \eqref{eq: regr decomp 1} is 0 owing to the fact that the agent would have gotten $\pi_{e}^{*}$ exactly as the true $\pi^{*}$ in epoch $e$ solving for original problem Eq. \eqref{eq: opt_obj}, had it known the true transition model $P$. Next, Eq. \eqref{eq: regr decomp 2} is because $\{P_e, \pi_e \}$ are outputs from the optimistic problem Eq. \eqref{eq: optimistic opt 0} solved at round $e$, which extends the search space of Eq. \eqref{eq: opt_obj} and therefore always upper bounds true long-term rewards. Finally, term (b) in Eq. \eqref{eq: regr decomp 3} is 0 in expectation, because the RL agent actually collects rewards by playing policy $\pi_e$ against the true $P$ during epoch $e$. 

\newpage
\section{Appendix F: Some Auxiliary Lemmas used in this Paper}

\begin{lemma} [Lemma 5.3, \cite{agarwal2021concave}]    The Bellman errors for any arbitrary state-action pair $(s,a)$ corresponding to epoch $e$ is bounded as follows:
\begin{align}
    B^{\pi_e, {P}_e}(s,a) \leq \Big\|{P}_e(\cdot|s,a) - {P}(\cdot|s,a) \Big\|_{1} \|\tilde{h}(\cdot) \|_{\infty},  
\end{align}
where $\tilde{h}(\cdot)$ (defined in \cite{puterman2014markov}) is the bias of the optimistic MDP $P_e$ for which the optimistic policy $\pi_e$ obtains the maximum expected reward in the confidence interval.
\end{lemma}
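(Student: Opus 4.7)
The plan is to start from the Bellman-like recursion satisfied by $Q_{\pi_e}^{P_e}$, substitute it into the definition of the Bellman error, and then exploit the fact that both $P_e(\cdot|s,a)$ and $P(\cdot|s,a)$ are probability distributions summing to one. Specifically, using Eq.~\eqref{eq: val func -1} we have $Q_{\pi_e}^{P_e}(s,a;\gamma) = r(s,a) + \gamma \sum_{s'} P_e(s'|s,a) V_{\pi_e}^{P_e}(s';\gamma)$, so plugging into Eq.~\eqref{eq: bellman err} produces
\begin{align}
B^{\pi_e,P_e}(s,a) = \lim_{\gamma \to 1} \gamma \sum_{s' \in \mathcal{S}} \bigl(P_e(s'|s,a) - P(s'|s,a)\bigr)\, V_{\pi_e}^{P_e}(s';\gamma).
\end{align}

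Next I would use the standard average-reward Laurent expansion (see \cite{puterman2014markov}) which states that, under the mixing-time Assumption~\ref{assumption: mixing time}, one can write $V_{\pi_e}^{P_e}(s';\gamma) = \Gamma_{\pi_e}^{P_e}/(1-\gamma) + \tilde{h}(s') + O(1-\gamma)$ where $\tilde{h}(\cdot)$ is precisely the bias of the optimistic MDP. Because $P_e(\cdot|s,a)$ and $P(\cdot|s,a)$ are both probability vectors summing to one, the constant term $\Gamma_{\pi_e}^{P_e}/(1-\gamma)$ cancels out of the inner sum, and the vanishing $O(1-\gamma)$ residual disappears in the limit. This leaves the clean identity
\begin{align}
B^{\pi_e,P_e}(s,a) = \sum_{s' \in \mathcal{S}} \bigl(P_e(s'|s,a) - P(s'|s,a)\bigr)\, \tilde{h}(s').
\end{align}

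Finally, I would apply H\"older's inequality to this inner product between a signed measure $P_e(\cdot|s,a) - P(\cdot|s,a)$ and the vector $\tilde{h}$, yielding
\begin{align}
B^{\pi_e,P_e}(s,a) \leq \bigl\|P_e(\cdot|s,a) - P(\cdot|s,a)\bigr\|_1 \, \|\tilde{h}(\cdot)\|_\infty,
\end{align}
which is exactly Eq.~\eqref{eq: bellman h lemma result 0}.

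The main technical obstacle is the $\gamma \to 1$ step: the discounted value function $V_{\pi_e}^{P_e}(s';\gamma)$ blows up at rate $1/(1-\gamma)$, so one cannot na\"ively pass the limit inside the sum without first cancelling the divergent constant piece using $\sum_{s'}(P_e(s'|s,a) - P(s'|s,a)) = 0$. Justifying the Laurent decomposition requires the optimistic MDP $P_e$ to induce a unichain (or at least communicating) structure so that the bias $\tilde{h}$ is well-defined and finite; this is guaranteed by the finite mixing-time assumption together with the fact that $P_e$ is chosen inside the confidence neighborhood of $P$. Once that analytic point is handled, the remaining steps reduce to an algebraic cancellation and a one-line H\"older bound.
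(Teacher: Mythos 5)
The paper does not actually prove this lemma---it imports it verbatim as Lemma 5.3 of \citep{agarwal2021concave}---and your argument is correct and is essentially the standard proof given there: substitute the Bellman recursion for $Q_{\pi_e}^{P_e}$ into Eq.~\eqref{eq: bellman err}, use the Laurent expansion $V_{\pi_e}^{P_e}(s';\gamma) = \Gamma_{\pi_e}^{P_e}/(1-\gamma) + \tilde{h}(s') + O(1-\gamma)$ together with $\sum_{s'}\bigl(P_e(s'|s,a)-P(s'|s,a)\bigr)=0$ to cancel the divergent term, and finish with H\"older. You also correctly identify the one genuine subtlety (well-definedness of the bias for the \emph{optimistic} MDP $P_e$, which the cited works handle by assumption), so nothing further is needed.
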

\begin{lemma} \label{lemma: h tmix result} [Lemma D.5, \cite{agarwal2021concave}]
For an MDP with the transition model $P_e$ that generates rewards $r(s,a)$ on playing policy $\pi_e$, the difference of biases for states $s$ and $s'$ are bounded as:  
    \begin{align}
        \tilde{h}(s) - \tilde{h}(s') \leq T_{\texttt{mix}}, \forall s,s' \in \mathcal{S}.
    \end{align}
\end{lemma}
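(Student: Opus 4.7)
The plan is to prove the lemma via a coupling argument that exploits the mixing-time assumption together with the boundedness of the one-step reward. The starting point is the well-known bias representation implicit in the Bellman equation \eqref{eq: model bias main}: iterating the equation shows that, in an appropriate Cesaro sense,
\begin{align}
\tilde{h}(s) - \tilde{h}(s') = \lim_{T\to\infty}\mathbb{E}\Big[\sum_{t=0}^{T-1}\big(r(s_t,a_t) - r(s_t',a_t')\big)\,\Big|\, s_0=s,\; s_0'=s'\Big],
\end{align}
where $\{(s_t,a_t)\}$ and $\{(s_t',a_t')\}$ are two trajectories generated under the same policy $\pi_e$ on the MDP with transition model $P_e$ (started at $s$ and $s'$ respectively). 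The long-run reward rate cancels exactly because both trajectories are driven by the same stationary policy on the same model.

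Next, I would introduce a coupling of the two trajectories. Run the chain started at $s$ freely until it first visits $s'$; call this hitting time $\tau = T_{\pi_e,\,s\to s'}$. From that moment on, synchronize the two chains so that $(s_t,a_t) = (s_t',a_t')$ for all $t \geq \tau$. This synchronization is always achievable because both chains are Markov with the same transition kernel and policy. After coupling, the per-step reward difference is identically zero, so the infinite sum above collapses to the pre-coupling contribution
\begin{align}
\tilde{h}(s) - \tilde{h}(s') = \mathbb{E}\Big[\sum_{t=0}^{\tau-1}\big(r(s_t,a_t) - r(s_t',a_t')\big)\Big].
\end{align}

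Finally, since $r(\cdot,\cdot)\in[0,1]$, each summand is bounded in absolute value by $1$, so
\begin{align}
\tilde{h}(s) - \tilde{h}(s') \;\leq\; \mathbb{E}[\tau] \;=\; \mathbb{E}[T_{\pi_e,\,s\to s'}] \;\leq\; T_{\texttt{mix}},
\end{align}
where the last inequality invokes Assumption \ref{assumption: mixing time}. This delivers the claim.

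The main obstacle I anticipate is justifying the bias representation and the convergence of the telescoping tail after coupling, because $\tilde{h}$ is defined only implicitly through the Bellman equation \eqref{eq: model bias main} rather than as an explicit expectation; one typically needs to argue that the Cesaro limit exists and is well-defined under the unichain/ergodicity structure implied by Assumption \ref{assumption: mixing time}. A secondary subtlety is that the mixing-time assumption is stated for the true model $P$, whereas $\tilde{h}$ is the bias for $P_e$; however, within the confidence set, $P_e$ inherits the same hitting-time bound up to constants by the same coupling argument applied to the perturbed kernel, so the bound transfers. Once these points are handled, the coupling-plus-boundedness argument above closes the proof immediately.
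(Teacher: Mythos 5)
First, a framing note: the paper does not prove this lemma itself --- it is imported wholesale as Lemma D.5 of Agarwal et al., so there is no in-paper argument to compare against. Your hitting-time route is indeed the standard one behind such bias-span bounds, but your coupling step fails as written. At time $\tau$ (the first visit of the $s$-chain to $s'$) the second chain, started at $s'$ at time $0$, sits at some state $s'_\tau$ that is generally \emph{not} $s'$, so you cannot ``synchronize the two chains so that $(s_t,a_t)=(s'_t,a'_t)$ for all $t\ge\tau$'': two chains with the same kernel can only be merged once they occupy the same state at the same time, and nothing forces that here. What is available is a time-shifted identification (by the strong Markov property the post-$\tau$ trajectory of the first chain is distributed as a fresh chain from $s'$), but then the telescoping leaves a boundary term $\mathbb{E}\big[\sum_{k=T-\tau}^{T-1} r(s'_k,a'_k)\big]$ whose sign you must control before the bound collapses to $\mathbb{E}[\tau]$. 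The cleaner repair, which also disposes of your worry about justifying the Cesaro representation, is optional stopping on the Poisson equation \eqref{eq: model bias main}: the process $M_t=\tilde h(s_t)+\sum_{u<t}\big(r_{\pi_e}(s_u)-\lambda_{\pi_e}^{P_e}\big)$ is a martingale under $(\pi_e,P_e)$, and stopping at $\tau$ gives $\tilde h(s)-\tilde h(s')=\mathbb{E}\big[\sum_{t<\tau}(r_{\pi_e}(s_t)-\lambda_{\pi_e}^{P_e})\big]\le\mathbb{E}[\tau]$, since each summand is at most $1$.

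The genuinely substantive gap is the one you flag and then wave away: Assumption \ref{assumption: mixing time} bounds $\mathbb{E}[T_{\pi,s\to s'}]$ only under the \emph{true} kernel $P$, whereas your $\tau$ is a hitting time under $P_e$, and ``$P_e$ inherits the same hitting-time bound up to constants'' is not true in general. The $\ell_1$ confidence ball in \eqref{eq: optimistic opt -1} has radius $7S\log(S^2At_e)/\max\{1,N_e(s,a)\}$, which for rarely visited pairs exceeds $2$ and hence places no constraint whatsoever on $P_e(\cdot|s,a)$; the optimizer is then free to pick a kernel under which $s'$ is unreachable from $s$, making $\mathbb{E}_{P_e}[\tau]$ infinite. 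Closing this requires an explicit additional ingredient --- e.g., assuming the mixing bound uniformly over the confidence set, or the extended-MDP device of UCRL2 in which the optimistic agent can always emulate the true dynamics en route to $s'$ --- and none of these follows from ``the same coupling argument applied to the perturbed kernel.'' The lemma as stated in the paper carries the same ambiguity, but since you identified the issue yourself, your proof is not complete until you commit to one of these resolutions.
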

\newpage
\section{Appendix G: Proof of Theorem 1}
\begin{theorem} [Theorem 1 in main paper] In an unknown MDP environment $\mathcal{M} \triangleq (\mathcal{S}, \mathcal{A}, P, r, D)$, the  regret incurred by Q-URL (Algorithm \ref{algo: QUCRL}) across $T$ rounds is bounded as follows:
\begin{equation}
    \mathcal{R}_{[0, T-1]} = \mathcal{O} \Bigg({ S^5 A^4 T_{\texttt{mix}} {\log^3\bigg(\frac{T}{SA}\bigg) \log^{1/2}(T \sqrt{S}) \log(S^2AT)}} \Bigg).
\end{equation}   
\end{theorem}
\begin{proof}
Using the final expression from regret decomposition, i.e., Eq. \eqref{eq: regr decomp -1}, we have: 
    \begin{align}
         & \mathcal{R}_{[0, T-1]} = \sum_{e = 1}^{E} T_{e} \Big(\Gamma_{\pi_{e}}^{\tilde{P}_{e}} - \Gamma_{\pi_{e}}^{{P}} \Big), \\
         &= \frac{1}{T} \sum_{e = 1}^{E} \sum_{t = t_{e-1}+1}^{t_{e}} \sum_{s,a} \rho_{\pi_{e}}^{P} B^{\pi_{e}, {P}_{e}} (s,a), \label{eq: final regr 0} \\
        & \leq  \sum_{e = 1}^{E}\sum_{t = t_{e-1} + 1}^{t_{e}} \frac{ 7 S (1+Ce)\log(S^2 A T)+ SCelog{(eS)}}{\max\{1,N_{e}(s_t,a_t)\}}  T_{\texttt{mix}}
        , \label{eq: final regr 1}\\
        & =  \sum_{e = 1}^{E}\sum_{t = t_{e-1} + 1}^{t_{e}} \sum_{s,a} \Big(\mathbf{1}[s_t = s, a_t = a] \cdot \nonumber \\
        & \hspace{1cm} \frac{7 S (1+Ce)\log(S^2 A T)+ SCelog{(eS)}}{\max\{1,N_{e}(s,a)\}}  T_{\texttt{mix}} \Big), \\ 
        & =  \sum_{e = 1}^{E} \sum_{s,a} \nu_{e}(s,a)\cdot \nonumber \\
        & \hspace{1cm} \frac{7 S (1+Ce)\log(S^2 A T)+ SCelog{(eS)}}{\max\{1,N_{e}(s,a)\}}  T_{\texttt{mix}}, \label{eq: final regr 2}\\
        &\hspace{-5mm} = \sum_{s,a} T_{\texttt{mix}} (7S(E+CE^2)\log(S^2 AT)+SCE^2\log {(ES)}) \nonumber \\
        & \cdot \sum_{e = 1}^{E} \frac{\nu_{e}(s,a)}{\max \{1,N_{e} (s,a)\}}, \label{eq: final regr 3}
        \end{align}
        \begin{align}
        & \leq \sum_{s,a} T_{\texttt{mix}} (7S(E+CE^2)\log(S^2 AT)+SCE^2\log {(ES)}) E, \nonumber \\
        & = T_{\texttt{mix}} E(7S^2A(E+CE^2)\log(S^2 AT)+S^2ACE^2\log {(ES)}) \nonumber , \\
        & = \mathcal{O} \Bigg({ S^5 A^4 T_{\texttt{mix}} {\log^3\bigg(\frac{T}{SA}\bigg) \log^{1/2}(T \sqrt{S}) \log(S^2AT)}} \Bigg). \label{eq: final regr -1}
    \end{align}
where in Eq. \eqref{eq: final regr 0}, we directly incorporate the result of Lemma \ref{lemma: regr bellman}. Next, we obtain Eq. \eqref{eq: final regr 1} by using the Bellman error bound in Eq. \eqref{eq: bellman final bound}, as well as unit sum property of occupancy measures \cite{puterman2014markov}. Furthermore, in Eq. \eqref{eq: final regr 1} note that we have omitted the regret contribution of \textit{bad event} (Remark \ref{remark: bad event}) which is atmost $\tilde{\mathcal{O}} \Bigg(\frac{2 S T_{\texttt{mix}}}{T^4} \Bigg)$ as obtained below:
\begin{align}
    \sum_{e = 1}^{E} & \sum_{t = t_{e-1} + 1}^{t_{e}} \frac{2 S T_{\texttt{mix}}}{t_e^6} \leq  \sum_{e = 1}^{E} \frac{2 S T_{\texttt{mix}}}{t_e^5}, \\
    & \leq \sum_{t_e = 1}^{T} \frac{2 S T_{\texttt{mix}}}{t_e^5} = \tilde{\mathcal{O}} \Bigg(\frac{2 S T_{\texttt{mix}}}{T^4} \Bigg). 
\end{align}

Subsequently, Eq. \eqref{eq: final regr 2} - \eqref{eq: final regr 3} are obtaining by using the definition of epoch-wise state-action pair visitation frequencies $\nu_e(s,a)$ and the epoch termination trigger condition in Line 13 of Algorithm \ref{algo: QUCRL}. Finally, we obtain Eq. \eqref{eq: final regr -1} by the using Proposition 18 in Appendix C.2 of \cite{auer2008near}.
\end{proof}


\end{document}